\newcolumntype{C}{>{\centering\arraybackslash}X}
\newcolumntype{Y}{>{\hsize=.7\hsize\RaggedRight\arraybackslash}X}
\newcolumntype{a}{>{\columncolor{lightgray}}Y}
\newcolumntype{C}[1]{>{\columncolor{lightgray}}p{#1}}
\def\rva{{\mathbf{a}}}
\def\rvb{{\mathbf{b}}}
\def\rvn{{\mathbf{n}}}
\def\rvx{{\mathbf{x}}}
\def\rvy{{\mathbf{y}}}
\def\rvz{{\mathbf{z}}}
\def\rmI{{\mathbf{I}}}
\DeclareMathAlphabet{\mathsfit}{\encodingdefault}{\sfdefault}{m}{sl}
\SetMathAlphabet{\mathsfit}{bold}{\encodingdefault}{\sfdefault}{bx}{n}
\def\gN{{\mathcal{N}}}
\newcommand{\E}{\mathbb{E}}
\newcommand{\KL}{D_{\mathrm{KL}}}
\newtheorem{prop}{Proposition}
\newtheorem*{prop*}{Proposition}
\definecolor{cvprblue}{rgb}{0.21,0.49,0.74}
\title{SeNM-VAE: Semi-Supervised Noise Modeling with Hierarchical Variational Autoencoder}
\author{
Dihan Zheng\textsuperscript{1}\footnotemark[1]
\qquad Yihang Zou\textsuperscript{1}\footnotemark[1] \qquad Xiaowen Zhang\textsuperscript{2} \qquad Chenglong Bao\textsuperscript{1,3,4}\footnotemark[2] \\
{\textsuperscript{1}Yau Mathematical Sciences Center, Tsinghua University, Beijing\quad\textsuperscript{2}Hisilicon, Shanghai} \\
{\textsuperscript{3}Yanqi Lake Beijing Institute of Mathematical Sciences and Applications, Beijing} \\
{\textsuperscript{4}State Key Laboratory of Membrane Biology, School of Life Sciences, Tsinghua University, Beijing} \\
{\textsuperscript{*}{\tt\normalsize Equal contribution} \quad \textsuperscript{$\dagger$}{\tt\normalsize Corresponding author}} \\
{\tt\small \{zhengdh19,zou-yh21\}@mails.tsinghua.edu.cn} \\
{\tt\small zhangxiaowen9@hisilicon.com \quad clbao@mail.tsinghua.edu.cn}
}
\begin{document}
\maketitle
\begin{abstract}
The data bottleneck has emerged as a fundamental challenge in learning based image restoration methods. Researchers have attempted to generate synthesized training data using paired or unpaired samples to address this challenge. This study proposes SeNM-VAE, a semi-supervised noise modeling method that leverages both paired and unpaired datasets to generate realistic degraded data. Our approach is based on modeling the conditional distribution of degraded and clean images with a specially designed graphical model. Under the variational inference framework, we develop an objective function for handling both paired and unpaired data. We employ our method to generate paired training samples for real-world image denoising and super-resolution tasks. Our approach excels in the quality of synthetic degraded images compared to other unpaired and paired noise modeling methods. Furthermore, our approach demonstrates remarkable performance in downstream image restoration tasks, even with limited paired data.
With more paired data, our method achieves the best performance on the SIDD dataset.
\end{abstract}    
\section{Introduction}
\label{sec:intro}

Image restoration is a fundamental and essential problem in image processing and computer vision, aiming to restore the underlying signal from its corrupted observation. Traditional methods employ the Maximum a Posteriori (MAP) framework, transforming the image restoration problem into an optimization problem. In these approaches, the objective function comprises a data fidelity term and a regularization term corresponding to the degradation and prior models, respectively. Over the years, the prior model has been extensively studied. Before the advent of deep learning, researchers utilized hand-crafted priors, such as sparsity~\cite{rudin1992nonlinear,perona1990scale}, non-local similarity~\cite{buades2005non,dabov2007color}, and low-rankness~\cite{dong2012nonlocal,gu2014weighted}. Recently, harnessing the power of deep neural networks has enabled achieving more accurate prior models through pre-trained generative models derived from a plethora of unlabeled clean signals~\cite{song2021solving,kawar2022denoising}. 


Deep learning based methods have achieved remarkable success in image restoration tasks, such as image denoising~\cite{zhang2017beyond,zhang2018ffdnet,guo2019toward} and super-resolution (SR)~\cite{dong2014learning,dong2016accelerating,wang2018esrgan,lugmayr2020srflow}. These methods aim to learn an end-to-end mapping for restoration using paired training data. Owing to the powerful representational capabilities of deep neural networks, these methods typically outperform traditional approaches. However, their effectiveness is contingent upon the availability of high-quality paired training data.

Collecting training data poses its challenges. First, real-world degradation is highly complex due to the intricate camera image signal processing (ISP) pipeline~\cite{hasinoff2010noise,abdelhamed2019noise,gow2007comprehensive}, rendering the simulation process challenging. Another approach entails manual collection, where clean and degraded pairs are obtained through long and short exposure~\cite{brummer2019natural} or using statistical methods~\cite{abdelhamed2018high,plotz2017benchmarking}. Nevertheless, these approaches inevitably suffer from the misalignment problem between clean and degraded images~\cite{wei2019single}, making the process expensive and time-consuming.

We conclude that a key problem in real-world image restoration is obtaining an accurate degradation model. With the degradation model, one can tackle the restoration problem either via a classical optimization based method with a pre-defined prior model~\cite{zheng2020unsupervised,cheng2023score} or a supervised learning based method with synthesized training data~\cite{zheng2022learn,wolf2021deflow}. Accordingly, we investigate a scenario where a limited amount of paired data and a large amount of unpaired data are available, referred to as a semi-supervised dataset. Our approach involves learning the unknown degradation model from this semi-supervised training dataset and synthesizing more paired data using this degradation model. We then use existing image restoration networks to learn a supervised image restoration model from the synthesized data. To obtain the degradation model, we design a graphical model that characterizes the relationship between the noisy image $\rvy$ and the clean image $\rvx$. We introduce two latent variables, $\rvz$, and $\rvz_\rvn$, representing the image content and degradation information, respectively. Furthermore, we assume that $\rvx$ is generated by $\rvz$, and $\rvy$ is generated by $\rvz$ and $\rvz_\rvn$. Using the idea from VAE~\cite{kingma2013auto}, we approximate the conditional distribution $p(\rvy|\rvx)$ with encoding and decoding processes. To effectively utilize the semi-supervised dataset, we employ a mixed inference model for $q(\rvz |\rvx,\rvy)$ to further decompose the objective function for paired and unpaired datasets. We apply our SeNM-VAE model to learn real-world image degradation.
Experimental results demonstrate that the proposed SeNM-VAE model exhibits promising performance in noise modeling, achieving comparable results to supervised learning methods trained with fully paired data.
Moreover, we achieve the best performance by finetuning an existing denoising network on the SIDD benchmark. Our main contributions are summarized as follows.
\begin{itemize}
    \item Leveraging limited paired data and abundant unpaired data, we propose SeNM-VAE to obtain an effective model for simulating the degradation process. This is important for generating high-quality training samples for real-world image restoration tasks when obtaining training samples is difficult. Using the variational inference method, SeNM-VAE is based on a specially designed graphical model and a hierarchical structure with multi-layer latent variables.
    
    \item Experimental results on the real-world noise modeling and downstream applications, such as image denoising and SR, validate the advantages of the proposed SeNM-VAE, and we achieve the best performance on the SIDD benchmark.
\end{itemize}

\section{Related work}
\label{sec:related_work}

{\noindent \bf Semi-supervised image restoration.} To alleviate the challenges of acquiring paired data for image restoration tasks, researchers have been investigating semi-supervised techniques, including image dehazing~\cite{li2019semi}, deraining~\cite{wei2019semi,cui2022semi,jiang2023lightweight,wei2021semi}, and low-light image enhancement~\cite{malik2023semi}. These approaches typically rely on image priors, such as Total Variation (TV)~\cite{rudin1992nonlinear} and the dark channel prior~\cite{he2010single}, to formulate a loss function for unlabeled datasets. Some recent works, such as~\cite{wei2021semi}, have employed CycleGAN~\cite{zhu2017unpaired} loss for unpaired datasets. However, these methods are often designed heuristically and lack theoretical rigor. In contrast, our approach is based on a specially designed graphical model, and the loss function is derived through variational inference, enhancing our method's interpretability.

{\noindent \bf Deep degradation modeling.} Due to the limitations of Gaussian noise in capturing the signal-dependence of real-world noise~\cite{plotz2017benchmarking,zheng2020unsupervised}, researchers are exploring data-driven approaches that utilize either normalizing flow~\cite{abdelhamed2019noise,kousha2022modeling} or GAN~\cite{yue2020dual} to generate realistic noisy images with paired training data. Furthermore, some studies have employed unpaired data to learn the unknown degradation process, which mostly utilizes the GAN model and techniques such as cycle-consistency~\cite{bulat2018learn,lugmayr2019unsupervised} proposed in~\cite{zhu2017unpaired} and domain adversarial~\cite{fritsche2019frequency,wei2021unsupervised}. Other unpaired degradation modeling methods include those based on Flow and VAE~\cite{wolf2021deflow,zheng2022learn}. In contrast, we propose a semi-supervised degradation modeling approach for real-world image restoration that leverages both paired and unpaired datasets. 
\section{Our methodology}

In this section, we present our semi-supervised noise modeling method. Formally, our goal is to estimate the conditional distribution $p(\rvy|\rvx)$ with one paired dataset $\{(\rvx_i, \rvy_i)\}_{i=1}^{N_p}$ and two unpaired datasets $\{\rvx_i\}_{i=1}^{N_s}$, $\{\rvy_i\}_{i=1}^{N_t}$, where $N_p$, $N_s$, and $N_t$ denote the number of paired, source, and target training samples, respectively. Subsequently, we can generate paired training samples using the data from the source domain. In general, we can learn a conditional generative model to sample from $p(\rvy|\rvx)$ with paired samples $\{(\rvx_i, \rvy_i)\}$, whereas those conventional generative models are incapable of utilizing unpaired samples. Furthermore, if $N_p$ is small, achieving an accurate generative model becomes challenging. In this work, we propose a model that exploits the information of unpaired data with the assistance of the provided paired samples.

\begin{figure}
	\centering
	\includegraphics[width=1\linewidth]{./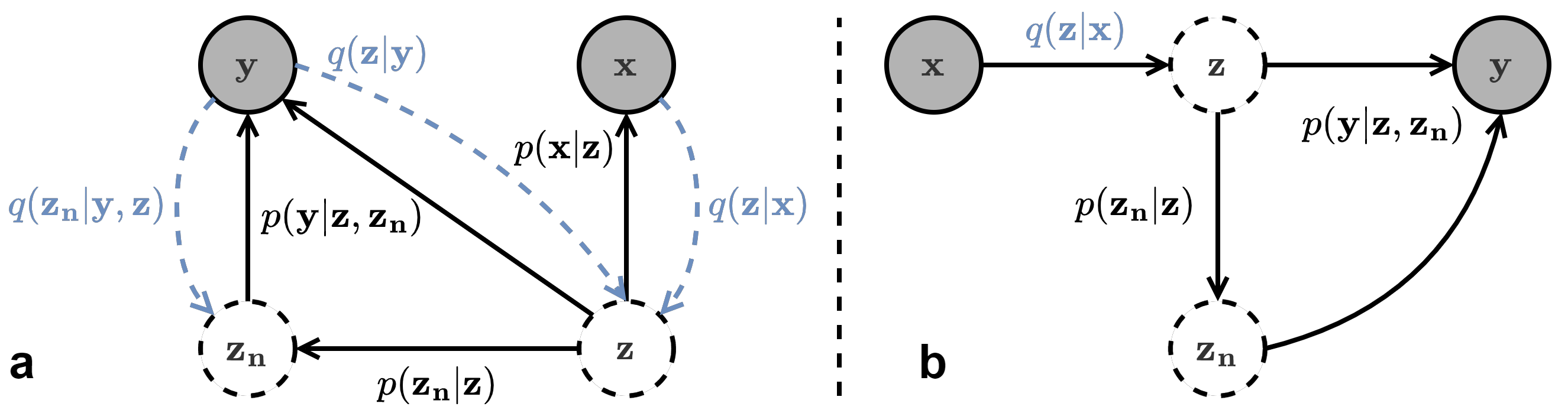}
	\caption{\textbf{a}: generation process for $(\rvx, \rvy)$ and the corresponding inference model. \textbf{b}: degradation generation procedure.}
	\label{graphical_model_noise_generation}
\end{figure}

\begin{figure}
	\centering
	\includegraphics[width=1\linewidth]{./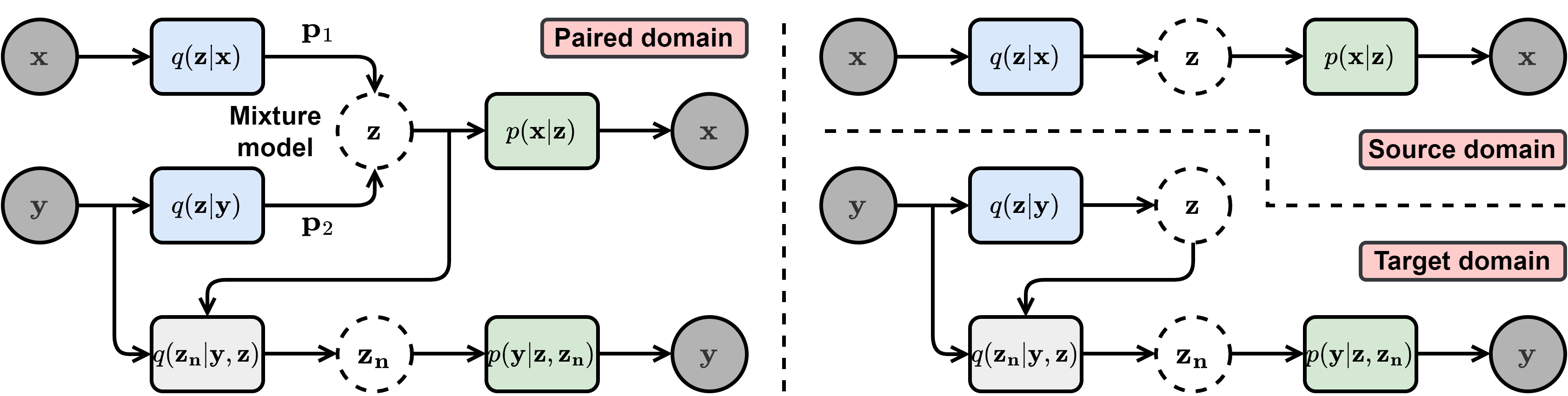}
	\caption{Data flow of the proposed semi-supervised noise modeling method that models three kinds of data: paired domain (degraded-clean image pairs), source domain (only clean images), and target domain (only degraded images).}
	\label{dataflow}
\end{figure}

 \begin{figure*}
    \centering
    \includegraphics[width=1\linewidth]{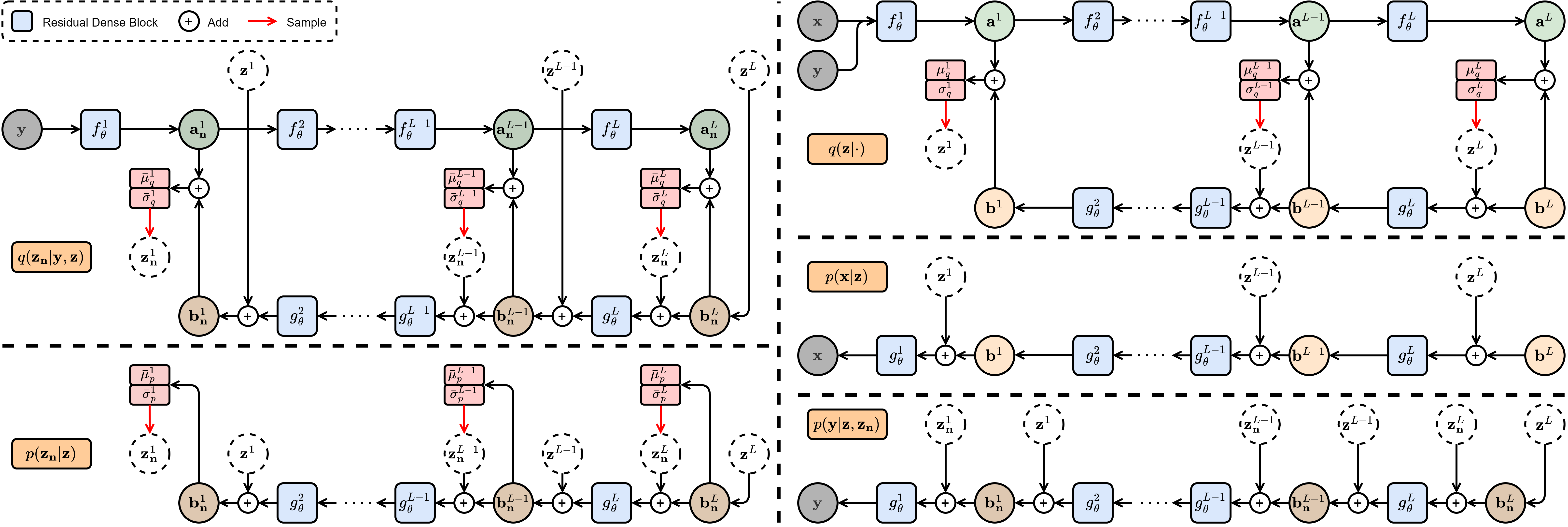}
    \caption{The hierarchical structure and network architecture.}
    \label{network_arch}
 \end{figure*}

\subsection{Proposed model}
To estimate the conditional density function $p(\rvy|\rvx)$, the traditional Maximal Likelihood (ML) estimation is to maximize the conditional log-likelihood function: $\max_{\theta} \E_{p(\rvx,\rvy)} \log p_{\theta}(\rvy|\rvx)$, where $\theta$ denotes the model parameter. Parameterizing $p(\rvy|\rvx)$ in high-dimensional spaces directly is usually difficult. Thus, we consider the latent variable model. In particular, we assume there are two latent variables, $\rvz$ and $\rvz_\rvn$, which encode the image content and degradation information, respectively. 
Since practical degradation is signal-dependent, $\rvz$ and $\rvz_\rvn$ may be entangled with each other in the latent space. 
To account for this entanglement, we employ the factorization $p(\rvz,\rvz_\rvn) = p(\rvz)p(\rvz_\rvn|\rvz)$, where we assume that $\rvz_\rvn$ is generated from $\rvz$. Moreover, we assume $\rvx$ is generated by $\rvz$, and $\rvy$ is generated by $\rvz$ and $\rvz_\rvn$, see Figure~\ref{graphical_model_noise_generation}a for the generative process of $(\rvx, \rvy)$. By introducing an inference model $q(\rvz, \rvz_\rvn | \rvx, \rvy)$, we can decompose the conditional log-likelihood as 
\begin{equation}\label{vi_decomposition}
	\begin{aligned}
		\log p(\rvy|\rvx) &= \E_{q(\rvz, \rvz_\rvn | \rvx, \rvy)} \log \frac{p(\rvy, \rvz, \rvz_\rvn | \rvx)}{q(\rvz, \rvz_\rvn | \rvx, \rvy)} \\
		& + \KL (q(\rvz, \rvz_\rvn | \rvx, \rvy) \| p(\rvz, \rvz_\rvn | \rvx, \rvy)),
	\end{aligned}
\end{equation}
where $\KL$ denotes the Kullback–Leibler (KL) divergence, and the expectation term in \eqref{vi_decomposition} is called the conditional Evidence Lower BOund (cELBO)~\cite{sohn2015learning}. Due to the intractability of the original log-likelihood function, we choose to maximize the cELBO for density estimation. From the graphical model in Figure~\ref{graphical_model_noise_generation}a, we have
\begin{equation}
	\begin{aligned}
		& p(\rvy, \rvz, \rvz_\rvn|\rvx) = p(\rvz|\rvx) p(\rvz_\rvn|\rvz) p(\rvy | \rvz,\rvz_\rvn), \\
		& p(\rvz, \rvz_\rvn | \rvx, \rvy) = p(\rvz | \rvx, \rvy) p(\rvz_\rvn | \rvy, \rvz).
	\end{aligned}
\end{equation}
To match the factorization of $p(\rvz, \rvz_\rvn | \rvx, \rvy)$, we choose 
\begin{equation}
	q(\rvz, \rvz_\rvn | \rvx, \rvy) = q(\rvz | \rvx, \rvy) q(\rvz_\rvn | \rvy, \rvz),
\end{equation}
then the cELBO becomes
\begin{equation}\label{cELBO}
	\begin{aligned}
		&\E_{q(\rvz | \rvx, \rvy)q(\rvz_\rvn|\rvy,\rvz)} \log p(\rvy|\rvz,\rvz_\rvn) - \KL(q(\rvz|\rvx,\rvy) \| p(\rvz|\rvx)) \\
        &- \E_{q(\rvz|\rvx,\rvy)}\KL(q(\rvz_\rvn|\rvy,\rvz)\|p(\rvz_\rvn|\rvz)).
	\end{aligned}
\end{equation}
Please refer to the Appendix for the detailed derivation process. However, the inference model $q(\rvz | \rvx, \rvy)$ requires paired training data, which hinders the decomposition of the above cELBO to utilize unpaired data. One approach to overcome this limitation and further decouple $q(\rvz | \rvx, \rvy)$ is to use a mixture model~\cite{shi2019variational}. Specifically, we define $q(\rvz|\rvx,\rvy)$ as a linear combination of two mixture components $q(\rvz | \rvx)$ and $q(\rvz | \rvy)$, \ie,
\begin{equation}\label{mix_model}
	q(\rvz|\rvx,\rvy) = p_1 q(\rvz | \rvx) + p_2 q(\rvz | \rvy).
\end{equation}
where $p_1$ and $p_2$ are the mixture weights, and we choose $p_1 = p_2 = 0.5$ in our method. However, this formulation leads to the absence of a closed-form solution for the KL divergence between $q(\rvz|\rvx,\rvy)$ and the conditional prior distribution $p(\rvz|\rvx)$ in the cELBO. Fortunately, we have the following proposition:
\begin{prop}\label{kl_bound}
Let $q(\rvz|\rvx,\rvy)$ be a mixture model of $q(\rvz|\rvx)$ and $q(\rvz|\rvy)$, as described in~\eqref{mix_model}, then:
\begin{equation}\label{kl_ineq}
\begin{aligned}
    \KL(q(\rvz|\rvx,\rvy) \| p(\rvz|\rvx)) \leq & p_1 \KL (q(\rvz|\rvx) \| p(\rvz|\rvx)) \\
    +& p_2 \KL (q(\rvz|\rvy) \| p(\rvz|\rvx)).
\end{aligned}
\end{equation}
Moreover, suppose that $q(\rvz|\rvx)=p(\rvz|\rvx)$ by sharing the same neural network. Then:
\begin{equation}
\KL(q(\rvz|\rvx,\rvy) \| p(\rvz|\rvx)) \leq p_2 \KL (q(\rvz|\rvy) \| q(\rvz|\rvx)).
\end{equation}
\end{prop}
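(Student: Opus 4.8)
The plan is to recognize that the first inequality is exactly the convexity of the KL divergence in its first argument, and to prove it via the log-sum inequality; the second inequality then follows by a one-line substitution.

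First I would establish the first bound. Recall the log-sum inequality: for nonnegative reals $a_1,a_2,b_1,b_2$,
\[
\sum_i a_i \log\frac{a_i}{b_i} \geq \Big(\sum_i a_i\Big)\log\frac{\sum_i a_i}{\sum_i b_i}.
\]
I would apply this pointwise in $\rvz$ with $a_1 = p_1 q(\rvz|\rvx)$, $a_2 = p_2 q(\rvz|\rvy)$, $b_1 = p_1 p(\rvz|\rvx)$, and $b_2 = p_2 p(\rvz|\rvx)$. The key trick is to split the denominator trivially as $p(\rvz|\rvx) = p_1 p(\rvz|\rvx) + p_2 p(\rvz|\rvx)$, so that $\sum_i b_i = p(\rvz|\rvx)$ while $\sum_i a_i = q(\rvz|\rvx,\rvy)$ by the mixture definition~\eqref{mix_model}. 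Since each ratio $a_i/b_i$ has its weight $p_i$ cancelled, the inequality yields the pointwise bound
\[
q(\rvz|\rvx,\rvy)\log\frac{q(\rvz|\rvx,\rvy)}{p(\rvz|\rvx)} \leq p_1 q(\rvz|\rvx)\log\frac{q(\rvz|\rvx)}{p(\rvz|\rvx)} + p_2 q(\rvz|\rvy)\log\frac{q(\rvz|\rvy)}{p(\rvz|\rvx)}.
\]
Integrating both sides over $\rvz$ then gives exactly~\eqref{kl_ineq}.

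For the second claim I would simply substitute the hypothesis $q(\rvz|\rvx)=p(\rvz|\rvx)$ into~\eqref{kl_ineq}. The first term on the right becomes $p_1\,\KL(q(\rvz|\rvx)\,\|\,q(\rvz|\rvx))=0$, and in the second term the reference density $p(\rvz|\rvx)$ may be rewritten as $q(\rvz|\rvx)$, producing $p_2\,\KL(q(\rvz|\rvy)\,\|\,q(\rvz|\rvx))$, as required.

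There is no serious obstacle here; the result is the standard convexity of the KL divergence. The only point requiring mild care is the bookkeeping of the mixture weights when invoking the log-sum inequality—one must pair the two-term sums in the numerator and denominator so that the weights $p_1,p_2$ cancel inside the logarithms and the right-hand side collapses to the clean form above. An equivalent route would be to cite the joint convexity of the map $(q,p)\mapsto\KL(q\,\|\,p)$ and invoke Jensen's inequality directly, but the pointwise log-sum argument is self-contained and makes the cancellation transparent.
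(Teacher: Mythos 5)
Your proof is correct and follows essentially the same route as the paper's: both split the reference density as $p(\rvz|\rvx) = p_1 p(\rvz|\rvx) + p_2 p(\rvz|\rvx)$, apply the log-sum inequality pointwise so the mixture weights cancel inside the logarithms, and integrate; the second claim then follows by the same substitution $q(\rvz|\rvx)=p(\rvz|\rvx)$. No gaps to report.
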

See the supplemental material file for the proof. Utilizing proposition~\ref{kl_bound}, we obtain a lower bound for \eqref{cELBO}:
\begin{equation}\label{cELBO_lb}
	\begin{aligned}
		&\E_{q(\rvz | \rvx, \rvy)q(\rvz_\rvn|\rvy,\rvz)} \log p(\rvy|\rvz,\rvz_\rvn) - p_2 \KL (q(\rvz|\rvy) \| q(\rvz|\rvx)) \\
        &- \E_{q(\rvz|\rvx,\rvy)}\KL(q(\rvz_\rvn|\rvy,\rvz)\|p(\rvz_\rvn|\rvz)).
	\end{aligned}
\end{equation}
Furthermore, we introduce a reconstruction term, $\E_{q(\rvz|\rvx,\rvy)} \log p(\rvx|\rvz)$, for the source domain data. This reconstruction term serves two purposes. Firstly, it facilitates the utilization of the source domain data. Secondly, it aids in regularizing the inference model, $q(\rvz|\rvx,\rvy)$, ensuring that the image content variable, $\rvz$, encapsulates all essential information from the clean image, $\rvx$, such that it can be faithfully reconstructed through $p(\rvx|\rvz)$. The objective function for our method is defined as:
\begin{equation}\label{loss}
\begin{aligned}
\text{Loss} &= \E_{q(\rvz|\rvx,\rvy)}\KL(q(\rvz_\rvn|\rvy,\rvz)\|p(\rvz_\rvn|\rvz)) \\
& + \lambda \KL (q(\rvz|\rvy) \| q(\rvz|\rvx)) - \E_{q(\rvz|\rvx,\rvy)} \log p(\rvx|\rvz)\\
& - \E_{q(\rvz | \rvx, \rvy)q(\rvz_\rvn|\rvy,\rvz)} \log p(\rvy|\rvz,\rvz_\rvn).
\end{aligned}
\end{equation}
In this formulation, we draw inspiration from the $\beta$-VAE framework~\cite{higgins2016beta}, where we introduce a weight parameter $\lambda$ in front of the term $\KL (q(\rvz|\rvy) \| q(\rvz|\rvx))$ to effectively balance the two KL divergence terms in~\eqref{loss}. Consequently, we can decompose~\eqref{loss} as
\begin{equation}
    \text{Loss} = \text{Loss}_p + \text{Loss}_s + \text{Loss}_t,
\end{equation}
where
\begin{equation}\label{loss_sep}
	\begin{aligned}
		\text{Loss}_{p} :=& - p_1 \E_{q(\rvz | \rvx)q(\rvz_\rvn|\rvy,\rvz)} \log p(\rvy|\rvz,\rvz_\rvn) \\
		+& p_1 \E_{q(\rvz|\rvx)}\KL(q(\rvz_\rvn|\rvy,\rvz)\|p(\rvz_\rvn|\rvz)) \\
		-& p_2 \E_{q(\rvz|\rvy)} \log p(\rvx | \rvz) + \lambda \KL (q(\rvz|\rvy) \| q(\rvz|\rvx)), \\
		\text{Loss}_{s} :=& - p_1 \E_{q(\rvz|\rvx)} \log p(\rvx | \rvz), \\
		\text{Loss}_{t} :=& - p_2 \E_{q(\rvz | \rvy)q(\rvz_\rvn|\rvy,\rvz)} \log p(\rvy|\rvz,\rvz_\rvn) \\
		&+ p_2 \E_{q(\rvz|\rvy)}\KL(q(\rvz_\rvn|\rvy,\rvz)\|p(\rvz_\rvn|\rvz)).
	\end{aligned}
\end{equation}
It is clear that $\text{Loss}_{p}$, $\text{Loss}_{s}$, and $\text{Loss}_{t}$ correspond to data from the paired, source, and target domains, respectively. Therefore, the loss function described in~\eqref{loss} can be computed with both paired and unpaired datasets. The data flow for each domain within our approach is depicted in Figure~\ref{dataflow}.

\subsection{Model settings}

{\noindent \bf Degradation generation.} Upon completion of the training process, it becomes feasible to synthesize degraded data from a clean input. Given a clean image $\rvx$, we can generate the corresponding degraded image from $p(\rvy | \rvx)$ using ancestral sampling. This process involves first deriving the image content latent variable $\rvz$ from $q(\rvz|\rvx)$, followed by sampling the degradation latent variable $\rvz_\rvn$ from $p(\rvz_\rvn|\rvz)$. Subsequently, the corresponding degraded image $\rvy$ is generated from $p(\rvy|\rvz, \rvz_\rvn)$, as depicted in Figure~\ref{graphical_model_noise_generation}b.

{\noindent \bf Hierarchical structure.} To enhance the quality of our generative performance, we employ a hierarchical VAE architecture, as proposed in previous studies~\cite{vahdat2020nvae,child2020very}. Specifically, we assume that our latent variable $\rvz$ and $\rvz_\rvn$ is composed of $L$ layers:
\begin{equation}
	\rvz = (\rvz^1,\dots,\rvz^L), \quad \rvz_\rvn = (\rvz_\rvn^1,\dots,\rvz_\rvn^L).
\end{equation}
Using the chain rule, the probabilistic distribution in~\eqref{loss} can be decomposed as
\begin{equation}\label{decomp}
    \begin{aligned}
    & q(\rvz|\rvx) = \prod_{l=1}^{L} q(\rvz^l|\rvx,\rvz^{>l}), q(\rvz_\rvn|\rvy,\rvz) = \prod_{l=1}^{L} q(\rvz_\rvn^l |\rvy,\rvz,\rvz_\rvn^{>l}), \\    
    & q(\rvz|\rvy) = \prod_{l=1}^{L} q(\rvz^1|\rvy,\rvz^{>1}), p(\rvz_\rvn|\rvz) = \prod_{l=1}^{L} p(\rvz_\rvn^{l} | \rvz,\rvz_\rvn^{>l}). \\ 
    \end{aligned}
\end{equation}
where $\rvz_\rvn^{>l} = (\rvz_\rvn^{l+1},\dots,\rvz_\rvn^{L})$. Then, the KL divergences in~\eqref{loss} can be factorized as:
\begin{align}
    &\KL(q(\rvz_\rvn|\rvy,\rvz)\|p(\rvz_\rvn|\rvz)) =  \KL(q(\rvz_\rvn^L|\rvy,\rvz)\|p(\rvz_\rvn^L|\rvz)) \nonumber \\
    &+ \sum_{l=1}^{L-1} \E_{q(\rvz_\rvn^{>l}|\rvy,\rvz)} \left[\KL(q(\rvz_\rvn^l|\rvy,\rvz,\rvz_\rvn^{>l}) \| p(\rvz_\rvn^l|\rvz, \rvz_\rvn^{>l})) \right], \nonumber \\
    &\KL(q(\rvz|\rvy)\|q(\rvz|\rvx)) =  \KL(q(\rvz^L|\rvy)\|q(\rvz^L|\rvx)) \nonumber \\
    &+ \sum_{l=1}^{L-1} \E_{q(\rvz^{>l}|\rvy)} \left[\KL(q(\rvz^l|\rvy,\rvz^{>l}) \| q(\rvz^l|\rvx, \rvz^{>l})) \right], \nonumber
\end{align}
where the conditional distributions $q(\rvz_\rvn^l|\rvz_\rvn^{>l},\rvy,\rvz)$, $p(\rvz_\rvn^l|\rvz, \rvz_\rvn^{>l})$, $q(\rvz^l|\rvy,\rvz^{>l})$, and $q(\rvz^l|\rvx, \rvz^{>l})$ are chosen to be Gaussian distributions, allowing us to calculate the KL divergence in a closed-form expression. 

{\noindent \bf Model architecture.} For the inference model $q(\rvz|\rvx)$, we choose
\begin{equation}
 	q(\rvz^l|\rvx,\rvz^{>l}) = \gN(\mu_{q}^l(\rva^l, \rvb^l), \sigma_{q}^l(\rva^l, \rvb^l)), \quad l=1,\dots,L,
 \end{equation}
where $\rva^l$ and $\rvb^l$ are encoding and decoding features in $l$-th layer, respectively, and $\mu^l_{q}$ and $\sigma^l_{q}$ are networks that convert $(\rva^l, \rvb^l)$ to the parameters of a Gaussian distribution. The encoding features $\{\rva^l\}_{l=1}^L$ are recursively obtained through
\begin{equation}
 	\rva^{1} = f_{\theta}^{1}(\rvx), \quad \rva^{l} = f_{\theta}^{l}(\rva^{l-1}), \quad l=2,\dots,L,
\end{equation}
where $f_{\theta}^{l}$ represents the basic block in $l$-th layer. The decoding feature $\rvb^l$ can be obtained through:
\begin{equation}
 	\rvb^{L}=\mathbf{0}, \quad \rvb^{l-1} = g_{\theta}^l(\rvz^{l}, \rvb^l), \quad l = 2,\dots,L,
\end{equation}
where $\rvz^{l}$ is sampled from $\gN(\mu_{q}^l(\rva^l, \rvb^l), \sigma_{q}^L(\rva^l, \rvb^l))$, and $g_{\theta}^l$ is the basic block in $l$-th decoding layer. We choose the structure of $q(\rvz|\rvy)$ to be the same as $q(\rvz|\rvx)$.

For the inference model $q(\rvz_\rvn|\rvy,\rvz)$, we assume that the degradation latent variable $\rvz_\rvn$ is distributed as follows:
\begin{equation}\label{qzn_dis}
 	q(\rvz_\rvn^l|\rvy,\rvz,\rvz_\rvn^{>l}) = \gN(\bar{\mu}_{q}^l(\rva_{\rvn}^l, \rvb_{\rvn}^l), \bar{\sigma}_{q}^l(\rva_\rvn^l, \rvb_\rvn^l)), l=1,\dots,L,
 \end{equation}
where $\rva_{\rvn}^l$ and $\rvb_\rvn^l$ are encoding and decoding features, respectively, and $\bar{\mu}_{q}^l$ and $\bar{\sigma}_{q}^l$ are Gaussian parameterization networks in the $l$-th layer. The encoding feature $\rva_\rvn^l$ is computed recursively as follows:
\begin{equation}
 	\rva^{1}_\rvn = f_{\theta}^{1}(\rvy), \quad \rva^{l}_\rvn = f_{\theta}^{l}(\rva^{l-1}_\rvn), \quad l=2,\dots,L,
 \end{equation}
and the decoding feature is obtained through
\begin{equation}\label{dec_fea_zn}
 \rvb^{L}_\rvn = \rvz^{L}, \quad \rvb^{l-1}_\rvn = \rvz^{l-1} + g_{\theta}^l(\rvz^{l}_\rvn, \rvb^l_\rvn), \quad l = 2,\dots,L,
\end{equation}
where $\rvz^{l}_\rvn$ is sampled from~\eqref{qzn_dis}. 

For the conditional prior distribution $p(\rvz_\rvn|\rvz)$, we employ the same architecture as in $q(\rvz_\rvn|\rvy,\rvz)$, and assume
\begin{equation}
    \begin{aligned}
	 		p(\rvz_\rvn^l|\rvz, \rvz_\rvn^{>l}) = \gN(\bar{\mu}_{p}^l(\rvb_{\rvn}^l), \bar{\sigma}_{p}^l(\rvb_{\rvn}^l)), \quad l = 1,\dots,L,
	 \end{aligned}
\end{equation}
where the decoding feature $\rvb_{\rvn}^l$ is derived from~\eqref{dec_fea_zn}, $\bar{\mu}_p^l$ and $\bar{\sigma}_p^l$ are Gaussian parameterization networks.

In the case of the generative models, we choose
 \begin{equation}
	 p(\rvx|\rvz) = \gN(g_\theta^1(\rvz^1,\rvb^1), \rmI), p(\rvy|\rvz,\rvz_\rvn) = \gN(g_\theta^1(\rvz_\rvn^1,\rvb_\rvn^1), \rmI).
\end{equation}

We adopt the Residual Dense Block~\cite{zhang2018residual} (RDB) as our basic block for $f_{\theta}^l$ and $g_\theta^l$, see Figure~\ref{network_arch}.

{\noindent \bf Degradation level controllable generation.} The degradation generation model should be capable of producing images with varying degradation levels, enabling the generation of images with different degradation levels using a single clean input. Therefore, we leverage training data from the paired domain to train a degradation level prediction network and then utilize this network to estimate the degradation level of images within the target domain. During training, we concatenate the latent variable $\rvz_\rvn$ sampled from $q(\rvz_\rvn|\rvy,\rvz)$ with its corresponding degradation level. In the generation stage, we concatenate the specified degradation level to $\rvz_\rvn$ sampled from $p(\rvz_\rvn|\rvz)$ to enable conditional image generation.
\section{Experiment and results}

We first evaluate the performance of our model in real-world noise modeling tasks and then validate the downstream performance on image denoising and SR tasks. In particular, we utilize our model to learn the unknown degradation process, and then we generate synthetic degraded images from the source domain to augment the original paired training samples. Finally, we apply an off-the-shelf supervised learning network to derive a restoration model from both the synthetic dataset and the original paired data.

\subsection{Datasets}
{\noindent \bf SIDD}: The smartphone image denoising dataset (SIDD)~\cite{abdelhamed2018high} offers a collection of 30,000 noisy images captured by five representative smartphone cameras across ten diverse scenes under varying lighting conditions, alongside their corresponding ground truth images. Here, we utilize the SIDD-Medium dataset, which includes 320 paired clean and noisy images. For each image in the dataset, we randomly crop 300 patches of size $256 \times 256$, yielding a total of 96,000 paired data. To establish a semi-supervised dataset, we randomly select 0.01\% (10), 0.1\% (96), and 1\% (960) paired samples from the cropped SIDD-Medium dataset, serving as the paired domain. These 96,000 paired images are then divided into two subsets, each containing 48,000 paired images. We utilize the clean images from the first subset as the source domain and the noisy images from the second subset as the target domain, resulting in 48,000 unpaired clean and noisy images in each domain. The SIDD dataset also includes validation and benchmark datasets, each containing 1,280 image blocks of size $256 \times 256$.

{\noindent \bf DND}: The Darmstadt Noise Dataset (DND)~\cite{plotz2017benchmarking} comprises a benchmark dataset containing 1,000 image blocks of size $512 \times 512$ extracted from 50 real-world noisy images obtained from four commercial cameras. We directly assess models trained with the SIDD dataset on this benchmark.

{\noindent \bf AIM19}: Track 2 of the AIM 2019 real-world SR challenge~\cite{lugmayr2019aim} provides a dataset of unpaired degraded and clean images. The degraded images are synthesized with an unknown combination of noise and compression. The challenge also provides a validation set of 100 paired images. To construct a semi-supervised dataset, we select the first 10 paired images from the validation dataset to serve as the paired domain and leverage the originally provided unpaired dataset as source and target domains. Performance evaluation is then conducted on the remaining 90 images within the validation set.

{\noindent \bf NTIRE20}: Track 1 of NTIRE 2020 SR challenge~\cite{lugmayr2020ntire} follows the same setting as the AIM19 dataset, inclusive of an unpaired training set and 100 validation images. Following the methodology of AIM19, we establish the semi-supervised dataset by amalgamating the original unpaired training set with the initial 10 paired images from the validation set. The performance is evaluated on the remaining 90 images within the validation set.

\subsection{Implementation details}
We train all SeNM-VAE models for 300k iterations using the Adam optimizer~\cite{kingma2014adam}. The initial learning rate is set to $10^{-4}$ and halved at 150k, 225k, 270k, and 285k iterations. The batch size is set to 8, consisting of randomly cropped patches of size $64 \times 64$. Batches are sampled randomly from the paired, source, and target domains with equal probability. We apply random flips and rotations to augment the data. The KL regularization parameter $\lambda$ is set to $10^{-7}$. The number of hierarchical layers $L$ is 7. Furthermore, we utilize the KL annealing method~\cite{bowman2015generating} for $\KL(q(\rvz_\rvn|\rvy,\rvz)\|p(\rvz_\rvn|\rvz))$. Specifically, we employ a linear annealing scheme in the first 10k iterations to prevent posterior collapse. To enhance the generation capacity of the VAE model, we incorporate the LPIPS~\cite{zhang2018unreasonable,monakhova2022dancing} loss and GAN loss~\cite{larsen2016autoencoding} to supplement the original L2 loss for noisy image reconstruction. For the SIDD dataset, SeNM-VAE is trained with 0.01\% (10), 0.1\% (96), and 1\% (960) of paired data. In addition, we include all paired data from the SIDD dataset to train our model, using only $\text{Loss}_{p}$ in~\eqref{loss_sep} as our objective function.

\begin{figure}
    \centering
    \includegraphics[width = 1.0\linewidth]{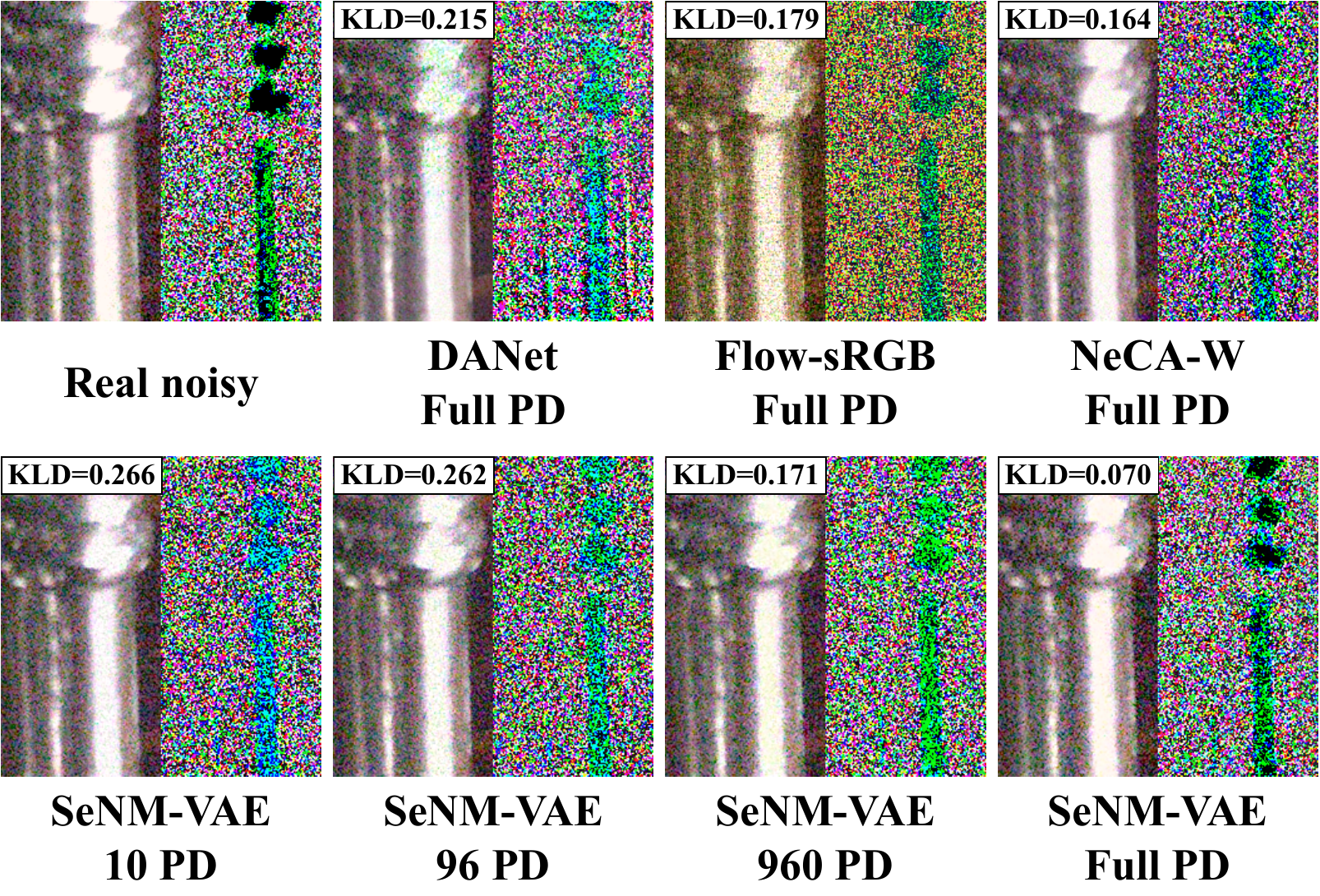}
    \caption{Visual comparison of generated noisy images, "PD" denotes paired data.}
    \label{nm_result}
\end{figure}

\begin{figure}
    \centering
    \includegraphics[width = 1\linewidth]{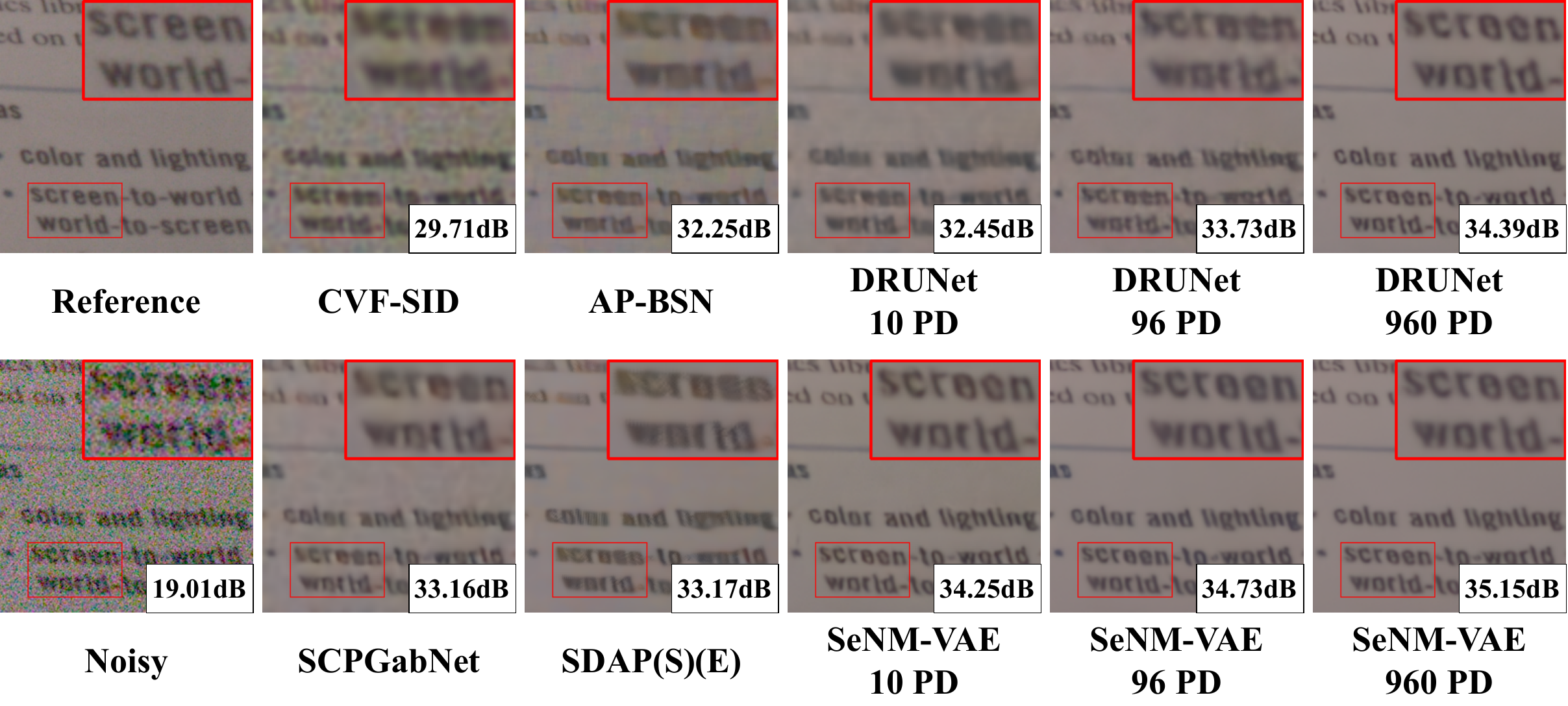}
    \caption{Visual comparison of denoising performance.}
    \label{ds_denoising}
\end{figure}

\begin{figure}
    \centering
    \includegraphics[width = 1\linewidth]{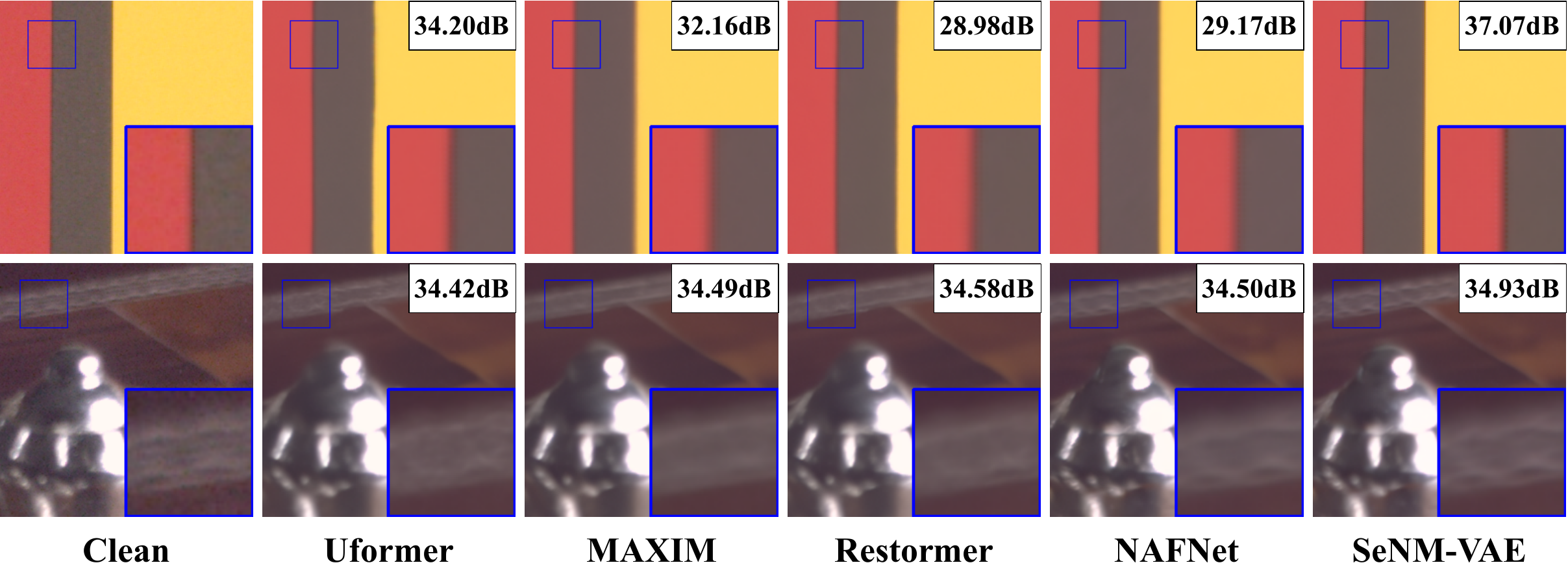}
    \caption{Visual comparison of denoising performance.}
    \label{ft_results}
\end{figure}

\begin{table}
	\setlength{\tabcolsep}{3.15pt}
	\renewcommand{\arraystretch}{0.8}
	\centering	
	\begin{tabular}{ccccc}
		\hline
		\specialrule{0em}{.9pt}{.9pt}
		\rowcolor[gray]{0.95} 
		Method & \# Paired Data & FID $\downarrow$ & KLD $\downarrow$ & PSNR $\uparrow$ \\ 
		\specialrule{0em}{.9pt}{.9pt}
		\hline
		C2N~\cite{jang2021c2n} & \multirow{3}{*}{0} & 33.97 & 0.169 & 34.23 \\
		DeFlow~\cite{wolf2021deflow} & & 39.45 & 0.205 & 33.82 \\
		LUD-VAE~\cite{zheng2022learn} & & 35.31 & 0.108 & 34.91 \\ 
		\hline
		\multirow{3}{*}{\textbf{SeNM-VAE}} & 0.01\% (10) & 17.25 & 0.036 & 36.73 \\ 
        & 0.1\% (96) & 16.76 & 0.027 & 36.92 \\
        & 1\% (960) & \textbf{15.10} & \textbf{0.020} & \textbf{37.28} \\ 
		\hline
        DANet~\cite{yue2020dual} & \multirow{4}{*}{100\%} & 26.22 & 0.081 & 36.25 \\
        Flow-sRGB~\cite{kousha2022modeling} & & 28.60 & 0.047 & 33.24 \\
        NeCA-W~\cite{fu2023srgb} & & 19.96 & 0.030 & 37.04 \\ 
        \textbf{SeNM-VAE} & & \textbf{13.79} & \textbf{0.011} & \textbf{38.29} \\ 
        \hline
        Real noise & 100\% & 0 & 0 & 38.34 \\
        \hline
	\end{tabular}
	\caption{Comparison of noise quality on SIDD validation dataset. DnCNN~\cite{zhang2017beyond} is used as a downstream denoising model.}
    \label{noise_quality}
\end{table}

\begin{table*}
	\setlength{\tabcolsep}{11pt}
	\renewcommand{\arraystretch}{0.8}
	\centering 
	\begin{tabular}{cccccccc}
			\hline
			\specialrule{0em}{.9pt}{.9pt}
			\rowcolor[gray]{0.95} 
			& & \multicolumn{2}{c}{\cellcolor[gray]{0.95}SIDD Validation} &
			\multicolumn{2}{c}{\cellcolor[gray]{0.95}SIDD Benchmark} & \multicolumn{2}{c}{\cellcolor[gray]{0.95}DND Benchmark} \\ \rowcolor[gray]{0.95} 
			\multirow{-2}{*}{\cellcolor[gray]{0.95}Method} & \multirow{-2}{*}{\cellcolor[gray]{0.95}\# Paired Data} & PSNR $\uparrow$ & SSIM $\uparrow$ & PSNR $\uparrow$ & SSIM $\uparrow$ & PSNR $\uparrow$ & SSIM $\uparrow$\\ 
			\specialrule{0em}{.9pt}{.9pt}
			\hline
            N2V~\cite{krull2019noise2void} & \multirow{6}{*}{0} & 29.35 & 0.6510 & 27.68 & 0.668 & - & - \\
            N2S~\cite{batson2019noise2self} & & 30.72 & 0.787 & 29.56 & 0.808 & - & - \\
            CVF-SID~\cite{neshatavar2022cvf} & & 34.17 & 0.872 & 34.71 & 0.917 & 36.50 & 0.924 \\
			AP-BSN + R$^3$~\cite{lee2022ap} & & 35.91 & 0.8815 & 35.97 & 0.925 & 38.09 & 0.937 \\
            SCPGabNet~\cite{lin2023unsupervised} &  & 36.53  & 0.8860 & 36.53 & 0.925 & 38.11 & 0.939 \\
            SDAP(S)(E)~\cite{pan2023random} &  & 37.55  & 0.8943 & 37.53 & 0.936 & 38.56 & 0.940 \\
			\hline
			DRUNet & \multirow{2}{*}{0.01\% (10)} & 34.48 & 0.8658 & 34.45 & 0.909 & 34.37 & 0.904 \\
			\textbf{SeNM-VAE}      & & \textbf{37.96} & \textbf{0.9107} & \textbf{37.93} & \textbf{0.949} & \textbf{38.47} & \textbf{0.946} \\
			\hline
			DRUNet & \multirow{2}{*}{0.1\% (96)} & 37.68 & 0.9053 & 37.63 & 0.944 & 38.16 & 0.942 \\
			\textbf{SeNM-VAE}      & & \textbf{38.91} & \textbf{0.9134} & \textbf{38.85} & \textbf{0.953} & \textbf{39.32} & \textbf{0.951}           \\
			\hline
			DRUNet & \multirow{2}{*}{1\% (960)} & 38.93 & 0.9150 & 38.89 & 0.954 & 38.95 & 0.949 \\
			\textbf{SeNM-VAE}      & & \textbf{39.39} & \textbf{0.9176} & \textbf{39.34} & \textbf{0.956} & \textbf{39.47} & \textbf{0.953}          \\ 
			\hline
			DRUNet & \multirow{3}{*}{100\%}& 39.55 & 0.9187 & 39.51 & 0.957 & 39.52 & 0.952 \\
            VDN~\cite{yue2019variational}  & & 39.29 & 0.9109 & 39.26 & 0.955 & 39.38 & 0.952 \\
            DeamNet~\cite{ren2021adaptive} & & 39.40 & 0.9169 & 39.35 & 0.955 & 39.63 & 0.953 \\
			\hline
		\end{tabular}
	\caption{Comparison of denoising performance on SIDD and DND datasets.}
	\label{sidd_dnd}
\end{table*}

\begin{table}
	\setlength{\tabcolsep}{4.5pt}
	\renewcommand{\arraystretch}{0.8}
	\centering 
	\begin{tabular}{ccccc}
			\hline
			\specialrule{0em}{.9pt}{.9pt}
			\rowcolor[gray]{0.95} 
			& \multicolumn{2}{c}{\cellcolor[gray]{0.95}SIDD Validation} &
			\multicolumn{2}{c}{\cellcolor[gray]{0.95}SIDD Benchmark} \\ \rowcolor[gray]{0.95} 
			\multirow{-2}{*}{\cellcolor[gray]{0.95}Method} & PSNR $\uparrow$ & SSIM $\uparrow$ & PSNR $\uparrow$ & SSIM $\uparrow$ \\ 
			\specialrule{0em}{.9pt}{.9pt}
			\hline
            Uformer~\cite{wang2022uformer} & 39.89 & 0.960 & 39.74 & 0.958 \\
            MAXIM~\cite{tu2022maxim} & 39.96 & 0.960 & 39.84 & 0.959 \\
            Restormer~\cite{zamir2022restormer} & 40.02 & 0.960 & 39.86 & 0.959 \\
            NAFNet~\cite{chen2022simple} & 40.30 & 0.961  & 40.15 & 0.960 \\
            \hline
            \textbf{SeNM-VAE} & \textbf{40.49} & \textbf{0.962} & \textbf{40.38} & \textbf{0.961} \\
			\hline
		\end{tabular}
	\caption{Comparison of denoising performance on SIDD dataset. SeNM-VAE is trained using the full SIDD dataset and utilized to generate synthetic data for finetuning NAFNet.}
	\label{ft}
\end{table}

\subsection{Noise synthesis}
We first validate the performance of our method through real-world noise modeling tasks on the SIDD dataset.

{\noindent \bf Compared methods.} We compare our SeNM-VAE with three unpaired noise modeling methods, namely C2N~\cite{jang2021c2n}, DeFlow~\cite{wolf2021deflow}, and LUD-VAE~\cite{zheng2022learn}, as well as three fully paired noise modeling methods, namely DANet~\cite{yue2020dual}, Flow-sRGB~\cite{kousha2022modeling}, and NeCA-W~\cite{fu2023srgb}. 

{\noindent \bf Experiment settings and evaluation metrics.} All methods are trained on the SIDD dataset. After training, we apply the trained models to synthesize noisy images using clean images from the SIDD validation set. This allows us to compute the FID~\cite{heusel2017gans} score and the KL divergence between synthetic and real noisy images within the validation set. 
Furthermore, using clean images from the SIDD training dataset, we generate noisy images to create synthesized training sets. DnCNN~\cite{zhang2017beyond} models are then trained on these synthesized paired datasets. The performance of DnCNN models is evaluated on the SIDD validation dataset. We employ PSNR to evaluate denoising performance. Higher PSNR values indicate that the noise models are closer to the real noise model, signifying better noise quality.

{\noindent \bf Results.} The results are shown in Table~\ref{noise_quality}. Compared with the unpaired noise modeling approaches, our SeNM-VAE shows remarkable success across all three metrics, even when trained with just 10 paired samples. In contrast to fully paired noise modeling approaches, SeNM-VAE outperforms the SOTA methodology, NeCA-W~\cite{fu2023srgb}, utilizing only 1\% of the original SIDD dataset's paired data. Furthermore, when fully paired data is employed, our method significantly surpasses other noise modeling methods on all three metrics. The results demonstrate the effectiveness of our approach in generating high-quality synthetic noisy images. The visual results are illustrated in Figure~\ref{nm_result}, demonstrating that our method successfully captures the variance change of real-world noise across different regions within the image, particularly when using fully paired data.

\begin{table}
	\setlength{\tabcolsep}{7.3pt}
	\renewcommand{\arraystretch}{0.8}
	\centering	
	\begin{tabular}{cccc}
		\hline
		\specialrule{0em}{.9pt}{.9pt}
		\rowcolor[gray]{0.95} 
		Method      & PSNR $\uparrow$ & SSIM $\uparrow$ & LPIPS $\downarrow$ \\ 
		\specialrule{0em}{.9pt}{.9pt}
		\hline
		FSSR~\cite{fritsche2019frequency}   & 20.97 & 0.5383 & 0.374 \\
		Impressionism~\cite{ji2020real}     & 21.93 & 0.6128 & 0.426 \\
        DASR~\cite{wei2021unsupervised}     & 21.05 & 0.5674 & 0.376 \\
        DeFlow~\cite{wolf2021deflow}        & 21.43 & 0.6003 & 0.349 \\
        LUD-VAE~\cite{zheng2022learn}       & 22.25 & 0.6194 & 0.341 \\
		\hline
        ESRGAN~\cite{wang2018esrgan}        & 21.47 & 0.5748 & 0.353 \\
        \textbf{SeNM-VAE}                   & \textbf{22.48} & \textbf{0.6343} & \textbf{0.333} \\
        \hline
	\end{tabular}
	\caption{Comparison of SR performance on AIM19. ESRGAN and SeNM-VAE are trained with 10 paired data.}
    \label{aim19}
\end{table}

\begin{table}
	\setlength{\tabcolsep}{7.3pt}
	\renewcommand{\arraystretch}{0.8}
	\centering	
	\begin{tabular}{cccc}
		\hline
		\specialrule{0em}{.9pt}{.9pt}
		\rowcolor[gray]{0.95} 
		Method      & PSNR $\uparrow$ & SSIM $\uparrow$ & LPIPS $\downarrow$ \\ 
		\specialrule{0em}{.9pt}{.9pt}
		\hline
		FSSR~\cite{fritsche2019frequency}   & 21.01 & 0.4229 & 0.435 \\
		Impressionism~\cite{ji2020real}     & 25.24 & 0.6740 & 0.230 \\
        DASR~\cite{wei2021unsupervised}     & 22.98 & 0.5093 & 0.379 \\
        DeFlow~\cite{wolf2021deflow}        & 24.95 & 0.6746 & 0.217 \\
        LUD-VAE~\cite{zheng2022learn}       & 25.78 & 0.7196 & 0.220 \\
		\hline
        ESRGAN~\cite{wang2018esrgan}        & 25.05 & 0.6707 & 0.246 \\
        \textbf{SeNM-VAE} & \textbf{25.91} & \textbf{0.7222} & \textbf{0.216} \\
        \hline
	\end{tabular}
	\caption{Comparison of SR performance on NTIRE20. ESRGAN and SeNM-VAE are trained with 10 paired data.}
    \label{ntire20}
\end{table}

\subsection{Downstream denoising}

One significant application of our method is to benefit downstream denoising tasks. After training, we generate synthetic paired data using clean images from the source domain. Then, we employ DRUNet~\cite{zhang2021plug} as the downstream denoising model and train it on both the generated synthetic dataset and the data from the paired domain. 

{\noindent \bf Compared methods.} We compare our semi-supervised denoising method with direct training on the paired domain, and several self-supervised denoising methods, namely N2V~\cite{krull2019noise2void}, N2S~\cite{batson2019noise2self}, CVF-SID~\cite{neshatavar2022cvf}, AP-BSN + R$^3$~\cite{lee2022ap}, SCPGabNet~\cite{lin2023unsupervised}, SDAP(S)(E)~\cite{pan2023random}, and a fully supervised trained DRUNet~\cite{zhang2021plug}, VDN~\cite{yue2019variational}, and DeamNet~\cite{ren2021adaptive}. 

{\noindent \bf Experiment settings and evaluation metrics.} The denoising models are trained for 300k iterations with Adam optimizer. The initial learning rate is $10^{-4}$ and halved every 100k iterations. We evaluate the denoising performance of all the denoising methods on the SIDD validation dataset, the SIDD benchmark dataset, and the DND benchmark dataset, and report PSNR and SSIM~\cite{wang2004image} for each dataset.

{\noindent \bf Results.} The results are presented in Table~\ref{sidd_dnd}. 
The table indicates that our SeNM-VAE approach enhances performance compared to the baseline models on the SIDD and DND datasets.
Additionally, our method improves upon the results of self-supervised denoising methods on the SIDD dataset, even with access to only 10 paired data samples.
When utilizing 1\% paired data, our method yields results comparable to the fully supervised trained DRUNet model. As such, SeNM-VAE offers an effective strategy to narrow the performance gap between self-supervised and supervised denoising methods. Figure~\ref{ds_denoising} showcases visual results, where our approach achieves sharper edges and more thorough noise removal compared to other methods.

\subsection{Finetune denoising network}

Another application of our method involves generating additional training samples to finetune the denoising network. We utilize our SeNM-VAE, trained with all paired data from the SIDD dataset, to produce extra training data from clean images in the SIDD dataset. We then finetune a pre-trained denoising network, NAFNet~\cite{chen2022simple}. These results are presented in Table~\ref{ft}. The table illustrates that our method can significantly enhance the denoising performance of NAFNet, leading to superior performance on the SIDD dataset. Visual results are shown in Figure~\ref{ft_results}, where our method notably yields sharper edges and preserves more image information.

\subsection{Downstream SR}

We apply our method to simulate the degradation process in real-world SR tasks. Assume the degradation process is $\rvy = \mathcal{D}(\rvx) + \rvn$, where both the downsample operator $\mathcal{D}$ and noise $\rvn$ remain unknown. 
We substitute $\mathcal{D}$ with the Bicubic downsample operator $\mathcal{B}$ and incorporate the disparity between $\mathcal{D}$ and $\mathcal{B}$ into the noise term, yielding $\rvy = \mathcal{B}(\rvx) + \rvn'$, where $\rvn' = \rvn + \mathcal{D}(\rvx) - \mathcal{B}(\rvx)$.
After training, we generate synthetic low-resolution data and employ ESRGAN~\cite{wang2018esrgan} to train a restoration model.

{\noindent \bf Compared methods.} We compare our semi-supervised SR method with five unpaired degradation modeling methods, namely FSSR~\cite{fritsche2019frequency}, Impressionism~\cite{ji2020real}, DASR~\cite{wei2021unsupervised}, DeFlow~\cite{wolf2021deflow}, LUD-VAE~\cite{zheng2022learn}, and a supervised trained ESRGAN~\cite{wang2018esrgan}. 

{\noindent \bf Experiment settings and evaluation metrics.} All the SR models are trained for 60k iterations with Adam optimizer. We evaluate each SR method on the final 90 images from the AIM19 and NTIRE20 validation datasets. Performance metrics, including PSNR, SSIM, and LPIPS~\cite{zhang2018unreasonable}, are reported for both datasets.

{\noindent \bf Results.} The results are detailed in Table~\ref{aim19} and Table~\ref{ntire20}. Our method surpasses both the supervised ESRGAN model and the unpaired degradation modeling methods, which highlights the effectiveness of our model in leveraging a limited amount of paired data alongside unpaired data to enhance the generation of high-quality training samples. 


\subsection{Ablation study and discussions}

\begin{table}
	\setlength{\tabcolsep}{13pt}
	\renewcommand{\arraystretch}{0.8}
	\centering	
	\begin{tabular}{cccc}
		\hline
		\specialrule{0em}{.9pt}{.9pt}
		\rowcolor[gray]{0.95} 
		Total Loss      & FID $\downarrow$ & KLD $\downarrow$ & PSNR $\uparrow$ \\ 
		\specialrule{0em}{.9pt}{.9pt}
		\hline
		w/o recon. $\rvx$      & 24.26 & 0.070 & 35.60 \\
		w/ recon. $\rvx$       & 17.25 & 0.036 & 36.73 \\
		\hline
	\end{tabular}
	\caption{Ablation on the reconstruction loss for the clean image $\rvx$ on SIDD validation dataset with 10 paired data.}
    \label{clean_recon}
\end{table}

{\noindent \bf Ablation on reconstruction loss for clean image $\rvx$.} We perform an ablation experiment to the additional loss $-\E_{q(\rvz|\rvx,\rvy)} \log p(\rvx|\rvz)$ in~\eqref{loss} on the SIDD dataset with 10 paired samples, and the results are shown in Table~\ref{clean_recon}. 
The table demonstrates an improvement in noise modeling performance with the inclusion of this reconstruction loss.
One reason is that the reconstruction loss acts as $\text{Loss}_{s}$ for source domain data in~\eqref{loss_sep}, allowing effective utilization of source domain data.
Moreover, the reconstruction loss enables the transformation from noisy image $\rvy$ to the clean image $\rvx$, aiding in the disentanglement of image information $\rvz$ from noisy information $\rvz_\rvn$, which further enhances the model's ability to learn the noise distribution.

{\noindent \bf Analysis on the training domains.} To demonstrate the efficacy of our model in utilizing information from unpaired data domains, we conduct experiments using different numbers of unpaired data on the SIDD dataset with 10 paired data. Specifically, we randomly select 0, 96, and 960 images for the source and target domains to illustrate our model's capacity to exploit information from unpaired data domains. Then, we conduct ablation studies using only two domains to understand the roles of the source and target domains in the training process. The results are summarized in Table~\ref{domain_ablation}. From the table, we find that the noise modeling performance improves with increasing unpaired data, and incorporating all three domains yields the best noise modeling results. The findings indicate our model's effectiveness in leveraging information from unpaired datasets. 

\begin{table}
	\setlength{\tabcolsep}{6.2pt}
	\renewcommand{\arraystretch}{0.8}
	\centering	
	\begin{tabular}{cccccc}
		\hline
		\specialrule{0em}{.9pt}{.9pt}
		\rowcolor[gray]{0.95} 
		\multicolumn{3}{c}{\# Data} & & & \\
        \rowcolor[gray]{0.95}  Paired & Source & Target & \multirow{-2}{*}{\cellcolor[gray]{0.95}FID $\downarrow$} & 
        \multirow{-2}{*}{\cellcolor[gray]{0.95}KLD $\downarrow$} & \multirow{-2}{*}{\cellcolor[gray]{0.95}PSNR $\uparrow$}  \\ 
		\specialrule{0em}{.9pt}{.9pt}
		\hline
		10 & 0   & 0        & 19.79 & 0.073 & 35.97 \\
        10 & 96  & 96       & 17.52 & 0.054 & 36.67 \\
		10 & 960 & 960      & 17.54 & 0.036 & 36.78 \\
        \hline
        10 & 100\% & 0      & 22.98 & 0.043 & 36.23 \\
        10 & 0 & 100\%      & 19.11 & 0.050 & 36.24 \\
		10 & 100\% & 100\%  & 17.25 & 0.036 & 36.73 \\
		\hline
	\end{tabular}
	\caption{Comparison of noise quality on SIDD validation dataset with different numbers of unpaired samples.}
	\label{domain_ablation}
\end{table}

{\noindent \bf Analysis on KL weight $\lambda$.} We perform a parameter analysis for the KL weight coefficient $\lambda$. The results are presented in Figure~\ref{ablation_klw}. The figure shows that setting $\lambda$ in the range of $10^{-6}$ to $10^{-7}$ leads to better noise modeling results than other cases. 

\begin{figure}
	\centering
        \includegraphics[width=\linewidth]{./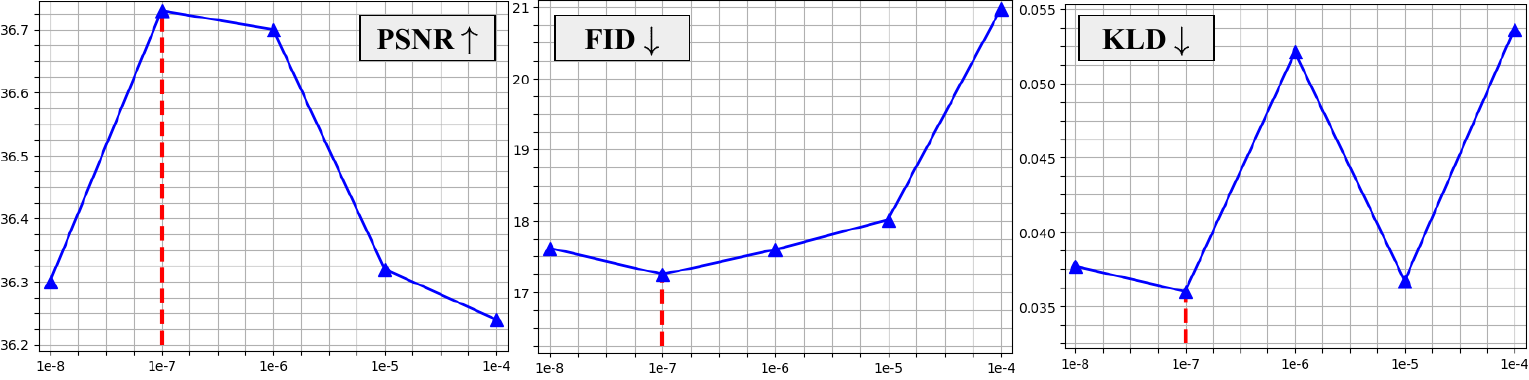}
	\caption{Parameter analysis of KL weight $\lambda$ on SIDD validation dataset with 10 paired data.}
	\label{ablation_klw}
\end{figure}

\section{Conclusion}

This paper presents SeNM-VAE, a semi-supervised noise modeling approach based on deep variational inference. The proposed method employs a latent variable model to capture the conditional distribution between corrupted and clean images, allowing for the transformation from a clean image to its corrupted counterpart. Our approach decomposes the objective function, enabling training with both paired and unpaired datasets. We apply SeNM-VAE to real-world noise modeling and downstream denoising and super-resolution tasks. Our method further improves upon other degradation modeling methods and achieves the best performance on the SIDD dataset.


\section*{Acknowledgements}

This work was supported by the National Key R\&D Program of China (No. 2021YFA1001300), the National Natural Science Foundation of China (No. 12271291).

{
    \small
    \bibliographystyle{ieeenat_fullname}
    \bibliography{main}
}

\clearpage
\setcounter{page}{1}
\maketitlesupplementary




\appendix

\section{Detailed derivation}
The derivation of Equation~\eqref{cELBO} in the main paper is elucidated in detail herein. By introducing an inference model $q\left(\rvz, \rvz_\rvn | \rvx, \rvy\right)$, we decompose $\log p\left(\rvy | \rvx\right)$ into the following two terms:
\begin{equation}
    \begin{aligned}
        \log p&\left(\rvy | \rvx\right) = \mathbb E_{q\left(\rvz, \rvz_\rvn | \rvx, \rvy\right)}\log\frac{p\left(\rvy, \rvz, \rvz_\rvn | \rvx\right)}{q\left(\rvz, \rvz_\rvn | \rvx, \rvy\right)} \\
        +& \left(\log p\left(\rvy | \rvx\right) - \mathbb E_{q\left(\rvz, \rvz_\rvn | \rvx, \rvy\right)}\log\frac{p\left(\rvy, \rvz, \rvz_\rvn | \rvx\right)}{q\left(\rvz, \rvz_\rvn | \rvx, \rvy\right)}\right),
    \end{aligned}
\end{equation}
where the first term represents the cELBO. The second term can be expressed as follows:
\begin{equation}
    \begin{aligned}
        & \log p\left(\rvy | \rvx\right) - \mathbb E_{q\left(\rvz, \rvz_\rvn | \rvx, \rvy\right)}\log\frac{p\left(\rvy, \rvz, \rvz_\rvn | \rvx\right)}{q\left(\rvz, \rvz_\rvn | \rvx, \rvy\right)} \\
        =& \mathbb E_{q\left(\rvz, \rvz_\rvn | \rvx, \rvy\right)} \left[\log p\left(\rvy | \rvx\right) - \log\frac{p\left(\rvy | \rvx\right)p\left(\rvz, \rvz_\rvn | \rvx, \rvy\right)}{q\left(\rvz, \rvz_\rvn | \rvx, \rvy\right)}\right] \\
        =& \mathbb E_{q\left(\rvz, \rvz_\rvn | \rvx, \rvy\right)} \log \frac{q\left(\rvz, \rvz_\rvn | \rvx, \rvy\right)}{p\left(\rvz, \rvz_\rvn | \rvx, \rvy\right)} \\
        =& \KL \left(q\left(\rvz, \rvz_\rvn | \rvx, \rvy\right) || p\left(\rvz, \rvz_\rvn | \rvx, \rvy\right)\right).
    \end{aligned}
\end{equation}
According to the proposed graphical model (as depicted in Figure 1a in the main paper), we have
\begin{equation}
	\begin{aligned}
            p\left(\rvy, \rvz, \rvz_\rvn | \rvx\right) =& p\left(\rvz | \rvx\right)p\left(\rvz_\rvn | \rvx, \rvz\right)p\left(\rvy | \rvx, \rvz, \rvz_\rvn\right) \\
            =& p\left(\rvz | \rvx\right)p\left(\rvz_\rvn | \rvz\right)p\left(\rvy | \rvz, \rvz_\rvn\right), \\
            p\left(\rvz, \rvz_\rvn | \rvx, \rvy\right) =& p\left(\rvz | \rvx, \rvy\right)p\left(\rvz_\rvn | \rvx, \rvy, \rvz\right) \\
            =& p\left(\rvz | \rvx, \rvy\right)p\left(\rvz_\rvn | \rvy, \rvz\right).
	\end{aligned}
\end{equation}
To maintain consistency with the decomposition of $p\left(\rvz, \rvz_\rvn | \rvx, \rvy\right)$, we choose
\begin{equation}
    q(\rvz, \rvz_\rvn | \rvx, \rvy) = q(\rvz | \rvx, \rvy) q(\rvz_\rvn | \rvy, \rvz).
\end{equation}
Consequently, the cELBO can be further factorized as
\begin{equation}
    \begin{aligned}
        & \mathbb E_{q\left(\rvz, \rvz_\rvn | \rvx, \rvy\right)}\log\frac{p\left(\rvy, \rvz, \rvz_\rvn | \rvx\right)}{q\left(\rvz, \rvz_\rvn | \rvx, \rvy\right)} \\
        =& \mathbb E_{q\left(\rvz, | \rvx, \rvy\right)q\left(\rvz_\rvn | \rvy, \rvz\right)} \log\frac{p\left(\rvz | \rvx\right) p\left(\rvz_\rvn | \rvz\right) p\left(\rvy | \rvz, \rvz_\rvn\right)}{q\left(\rvz | \rvx, \rvy\right) q\left(\rvz_\rvn | \rvy, \rvz\right)} \\
        =& \mathbb E_{q\left(\rvz, | \rvx, \rvy\right)q\left(\rvz_\rvn | \rvy, \rvz\right)} \log p\left(\rvy | \rvz_\rvn\right) + \mathbb E_{q\left(\rvz, | \rvx, \rvy\right)} \log \frac{p\left(\rvz | \rvx\right)}{q\left(\rvz | \rvx, \rvy\right)} \\
        &+ \mathbb E_{q\left(\rvz, | \rvx, \rvy\right)q\left(\rvz_\rvn | \rvy, \rvz\right)} \log \frac{p\left(\rvz_\rvn | \rvz\right)}{q\left(\rvz_\rvn | \rvy, \rvz\right)} \\
        =& \mathbb E_{q\left(\rvz, | \rvx, \rvy\right)q\left(\rvz_\rvn | \rvy, \rvz\right)} \log p\left(\rvy | \rvz_\rvn\right) - \KL\left(q\left(\rvz | \rvx, \rvy\right) || p\left(\rvz | \rvx\right)\right) \\
        &- \mathbb E_{q\left(\rvz | \rvx, \rvy\right)}\KL\left(q\left(\rvz_\rvn | \rvy, \rvz\right) || p\left(\rvz_\rvn | \rvy, \rvz\right)\right).
    \end{aligned}
\end{equation}

\section{Proof for Proposition~\ref{kl_bound}}
\label{supp_sec:proof_for_prop1}

\begin{prop*}
Let $q(\rvz|\rvx,\rvy)$ be a mixture model of $q(\rvz|\rvx)$ and $q(\rvz|\rvy)$:
\begin{equation}
	q(\rvz|\rvx,\rvy) = p_1 q(\rvz | \rvx) + p_2 q(\rvz | \rvy),
\end{equation}
then:
\begin{equation}
\begin{aligned}
    \KL(q(\rvz|\rvx,\rvy) \| p(\rvz|\rvx)) \leq & p_1 \KL (q(\rvz|\rvx) \| p(\rvz|\rvx)) \\
    +& p_2 \KL (q(\rvz|\rvy) \| p(\rvz|\rvx)).
\end{aligned}
\end{equation}
Moreover, suppose that $q(\rvz|\rvx)=p(\rvz|\rvx)$ by sharing the same neural network. Then:
\begin{equation}
\label{kl_ineq_re}
\KL(q(\rvz|\rvx,\rvy) \| p(\rvz|\rvx)) \leq p_2 \KL (q(\rvz|\rvy) \| q(\rvz|\rvx))
\end{equation}
\end{prop*}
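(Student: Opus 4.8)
The plan is to prove the convexity inequality first, since the second statement follows immediately by substitution. The key fact I would use is the \emph{joint convexity} of the KL divergence (equivalently, the log-sum inequality). Recall that for any two pairs of densities $(r_1, s_1)$ and $(r_2, s_2)$ and any weights $p_1 + p_2 = 1$ with $p_i \geq 0$, one has
\begin{equation}
\KL\!\left(p_1 r_1 + p_2 r_2 \,\|\, p_1 s_1 + p_2 s_2\right) \leq p_1 \KL(r_1 \| s_1) + p_2 \KL(r_2 \| s_2).
\end{equation}
First I would specialize this by taking the two ``numerator'' densities to be the mixture components $r_1 = q(\rvz|\rvx)$ and $r_2 = q(\rvz|\rvy)$, so that the convex combination $p_1 r_1 + p_2 r_2$ is exactly $q(\rvz|\rvx,\rvy)$ from~\eqref{mix_model}. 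For the ``denominator'' densities I would take $s_1 = s_2 = p(\rvz|\rvx)$; since $p_1 s_1 + p_2 s_2 = (p_1 + p_2) p(\rvz|\rvx) = p(\rvz|\rvx)$, the left-hand side collapses precisely to $\KL(q(\rvz|\rvx,\rvy) \| p(\rvz|\rvx))$. Substituting these choices into the joint convexity bound yields exactly~\eqref{kl_ineq}, completing the first part.

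The only real content is justifying the joint convexity inequality itself, which I would derive from the log-sum inequality. Writing the divergence as an integral $\int (p_1 r_1 + p_2 r_2) \log \frac{p_1 r_1 + p_2 r_2}{p_1 s_1 + p_2 s_2}$ and applying the log-sum inequality pointwise to the two terms $p_1 r_1$ and $p_2 r_2$ against $p_1 s_1$ and $p_2 s_2$, the integrand is bounded above by $p_1 r_1 \log\frac{r_1}{s_1} + p_2 r_2 \log\frac{r_2}{s_2}$; integrating then gives the stated bound. Alternatively, I could invoke the standard result that $(r,s) \mapsto \KL(r\|s)$ is jointly convex as a function of its two arguments and apply convexity directly. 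Either route is short and standard.

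For the second statement I would simply substitute the hypothesis $q(\rvz|\rvx) = p(\rvz|\rvx)$ into the already-proven inequality~\eqref{kl_ineq}. The first term becomes $p_1 \KL(q(\rvz|\rvx) \| q(\rvz|\rvx)) = 0$, since the KL divergence of any distribution with itself vanishes, and the prior $p(\rvz|\rvx)$ appearing in the second term is replaced by $q(\rvz|\rvx)$. This leaves only $p_2 \KL(q(\rvz|\rvy) \| q(\rvz|\rvx))$, which is exactly~\eqref{kl_ineq_re}.

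I do not anticipate a genuine obstacle here, as the result is an essentially immediate consequence of joint convexity of KL. The one point that warrants a sentence of care is verifying that the denominator choice $s_1 = s_2 = p(\rvz|\rvx)$ legitimately makes the mixture on the left collapse to a single KL term against $p(\rvz|\rvx)$; this hinges only on $p_1 + p_2 = 1$, which holds by the definition of the mixture weights. Everything else is bookkeeping.
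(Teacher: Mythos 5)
Your proposal is correct and follows essentially the same route as the paper: the paper also writes $p(\rvz|\rvx) = p_1\, p(\rvz|\rvx) + p_2\, p(\rvz|\rvx)$ (valid since $p_1 + p_2 = 1$), applies the log-sum inequality pointwise to the resulting integrand, and integrates to obtain~\eqref{kl_ineq}, then obtains~\eqref{kl_ineq_re} by substituting $q(\rvz|\rvx) = p(\rvz|\rvx)$ so that the first KL term vanishes. Your framing via joint convexity of KL is just a named restatement of the same argument, so there is nothing substantive to distinguish the two proofs.
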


\begin{proof}
Using the log-sum inequality, we have:
\begin{equation}
    \begin{aligned}
           & \KL(q(\rvz|\rvx,\rvy) \| p(\rvz|\rvx)) \\
         = & \KL(p_1 q(\rvz | \rvx) + p_2 q(\rvz | \rvy) \| p_1 p(\rvz|\rvx) + p_2 p(\rvz|\rvx) ) \\
         = & \int \left( p_1 q(\rvz | \rvx) + p_2 q(\rvz | \rvy) \right) \log \frac{p_1 q(\rvz | \rvx) + p_2 q(\rvz | \rvy)}{p_1 p(\rvz|\rvx) + p_2 p(\rvz|\rvx)} d\rvz \\
         \leq & \int p_1 q(\rvz|\rvx) \log \frac{p_1 q(\rvz|\rvx)}{p_1 p(\rvz|\rvx)} + p_2 q(\rvz|\rvy) \log \frac{p_2 q(\rvz|\rvy)}{p_2 p(\rvz|\rvx)} d\rvz \\
         = & p_1 \KL (q(\rvz|\rvx) \| p(\rvz|\rvx)) + p_2 \KL (q(\rvz|\rvy) \| p(\rvz|\rvx)),
    \end{aligned}
\end{equation}
then~\eqref{kl_ineq} holds. Furthermore, since we can parameterize $q(\rvz|\rvx)$ and $p(\rvz|\rvx)$ with the same distribution, then $q(\rvz|\rvx) = p(\rvz|\rvx)$, and~\eqref{kl_ineq} is reduced to~\eqref{kl_ineq_re}.
\end{proof}

\begin{figure}
	\centering
	\includegraphics[width=1\linewidth]{./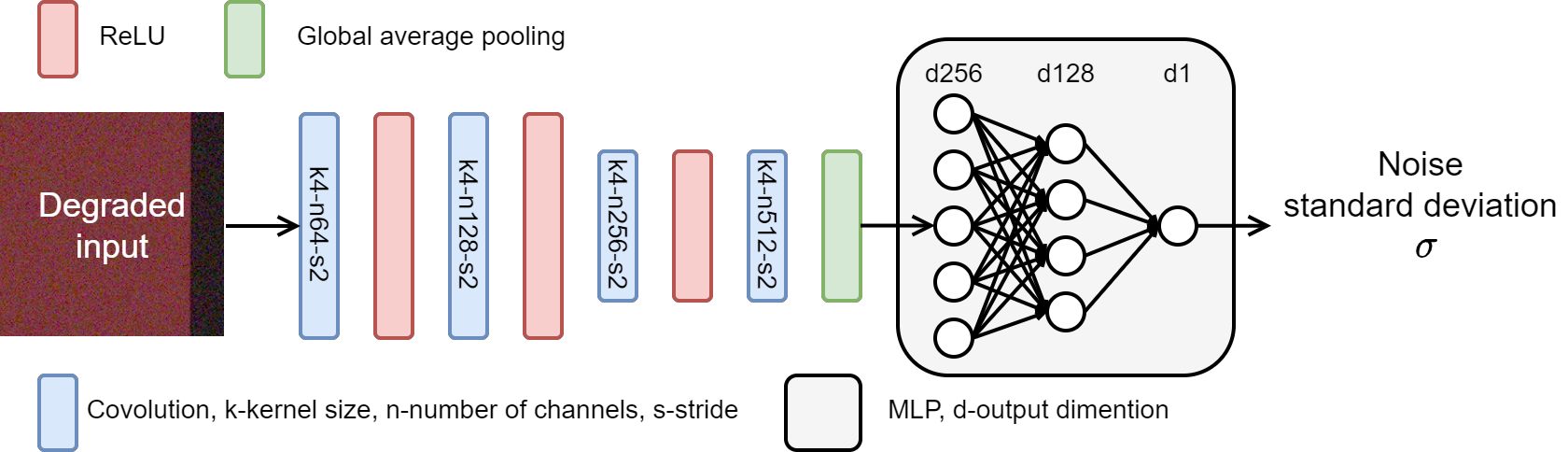}
	\caption{Architecture of degradation level prediction network.}
	\label{nln_arch}
\end{figure}

\section{Architecture of the degradation level prediction network}
We incorporate the standard deviation of the noise, along with the noisy image from the target domain, into our SeNM-VAE model to enable controlled generation of degradation levels. Specifically, we concatenate the degradation level with $\rvb_{\rvn}^l$ (see Equation~\eqref{dec_fea_zn} in the main paper) to enable conditional generation during both training and generation processes. Since the noisy image from the target domain lacks the corresponding clean image, its degradation level cannot be directly determined. Therefore, we introduce a degradation level prediction network trained on data from the paired domain and use it to predict the noise standard deviation for data from the target domain. The architecture of this network is illustrated in Figure~\ref{nln_arch}. Our approach has been shown to successfully generate images with varying input noise levels, as demonstrated in Figure~\ref{change_nl}.

\begin{figure*}
	\centering
	\begin{tabular}{c@{\hspace{0.005\linewidth}}c@{\hspace{0.005\linewidth}}c@{\hspace{0.005\linewidth}}c@{\hspace{0.005\linewidth}}c@{\hspace{0.005\linewidth}}c}
		
		\includegraphics[width=0.16\linewidth]{./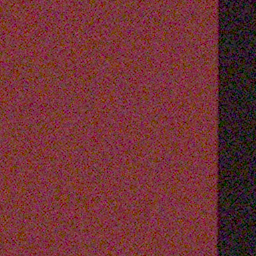} & 
		\includegraphics[width=0.16\linewidth]{./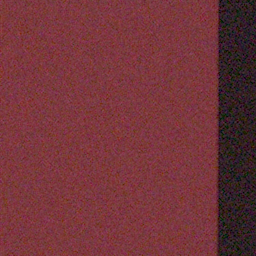} & 
		\includegraphics[width=0.16\linewidth]{./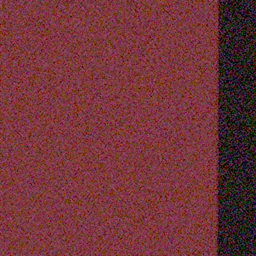} & 
		\includegraphics[width=0.16\linewidth]{./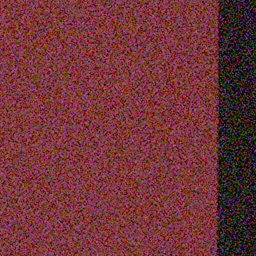} & 
		\includegraphics[width=0.16\linewidth]{./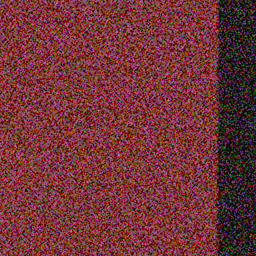} &
		\includegraphics[width=0.16\linewidth]{./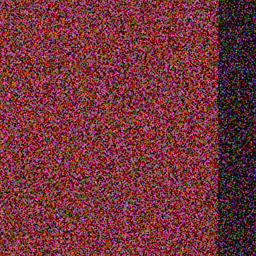} \\
		
		\includegraphics[width=0.16\linewidth]{./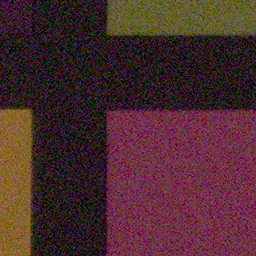} & 
		\includegraphics[width=0.16\linewidth]{./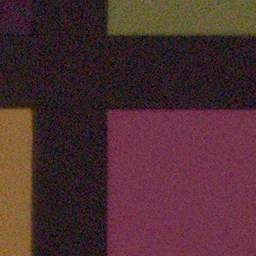} & 
		\includegraphics[width=0.16\linewidth]{./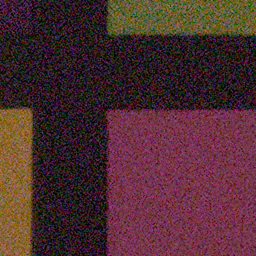} & 
		\includegraphics[width=0.16\linewidth]{./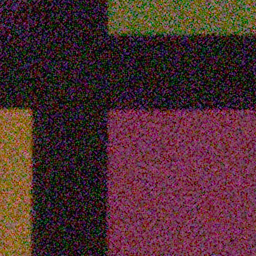} & 
		\includegraphics[width=0.16\linewidth]{./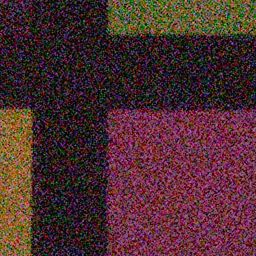} &
		\includegraphics[width=0.16\linewidth]{./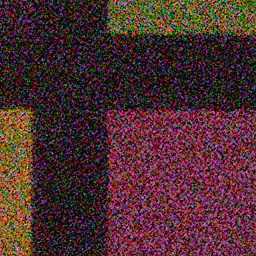} \\
		
		\small{Real noisy} & \small{Gen noisy, $\sigma=10$} & \small{Gen noisy, $\sigma=20$} & \small{Gen noisy, $\sigma=30$} & \small{Gen noisy, $\sigma=40$} & \small{Gen noisy, $\sigma=50$} \\
	\end{tabular}
	\caption{Visual results of degradation level controllable generation on SIDD validation dataset, $\sigma$ denotes the input degradation level. The model is trained with 10 paired data on the SIDD dataset.}
	\label{change_nl}
\end{figure*}

\section{Experiment}

\subsection{Implementation details}

{\noindent \bf Computation of KL divergence.} We use KL divergence to evaluate the fidelity of generated noisy images. The KL divergence between two images can be calculated as follows:
\begin{equation}
    \KL(\rmI_1, \rmI_2) = \sum_{i=0}^{255}p(\rmI_1=i) \log \frac{p(\rmI_1=i)}{p(\rmI_2=i)}.
\end{equation}

{\noindent \bf Training details of DnCNN.} We train all DnCNN~\cite{zhang2017beyond} models for 300k iterations using the Adam optimizer~\cite{kingma2014adam}. The initial learning rate is set to $10^{-4}$ and halved every 100k iterations. The batch size is 64, consisting of randomly cropped patches of size $40 \times 40$. Random flips and rotations are applied to augment the data. We evaluate the performance every 5k iterations on the SIDD validation dataset and select the model with the highest PSNR to evaluate on the benchmark set.

{\noindent \bf Training details of DRUNet.} All DRUNet~\cite{zhang2021plug} models are trained for 300k iterations using the Adam optimizer~\cite{kingma2014adam}. The initial learning rate is set to $10^{-4}$ and halved every 100k iterations. The batch size is 16, consisting of randomly cropped patches of size $128 \times 128$.  We augment the data by applying random flips and rotations. We evaluate the performance every 5k iterations on the SIDD validation dataset and select the model with the highest PSNR to evaluate on the benchmark set.

{\noindent \bf Training details of NAFNet.} We finetune the pre-trained NAFNet~\cite{chen2022simple} on synthesized training set. The model is trained for 400k iterations with Adam optimizer~\cite{kingma2014adam}. The initial learning rate is set to $10^{-5}$, and we use the cosine learning rate decay schedule. The batch size is 2, and the patch size is $256 \times 256$. We evaluate the denoising performance every 20k iterations on the SIDD validation dataset and select the model with the highest PSNR to evaluate on the benchmark set.

{\noindent \bf Training details of ESRGAN.} We use the training code from Impressionism~\cite{ji2020real} and train the ESRGAN~\cite{wang2018esrgan} model for 60k iterations. The initial learning rate is set to $10^{-4}$ and halved at 5k, 10k, 20k, 30k iterations. The batch size is 16, consisting of randomly cropped patches of size $128 \times 128$. Random flips and rotations are applied to augment the data. We use the model at 60k iterations to evaluate the final performance.

\subsection{Benchmark results}

We replenish Table~\ref{noise_quality} in the main paper with the denoising results of DnCNN~\cite{zhang2017beyond} on the SIDD and DND benchmarks. These results are shown in Table~\ref{noise_quality_sidd_bench} and Table~\ref{noise_quality_dnd_bench}. Compared to the unpaired noise modeling methods, our method yields superior denoising results, even with 10 paired samples. Notably, as the number of paired samples increases, our method consistently exhibits the most effective denoising performance across both benchmarks. This further attests to the competitive advantage of our method in producing high-quality synthesized noisy images.

\begin{table}
	\setlength{\tabcolsep}{7pt}
	\renewcommand{\arraystretch}{0.8}
	\centering	
	\begin{tabular}{cccc}
		\hline
		\specialrule{0em}{.9pt}{.9pt}
		\rowcolor[gray]{0.95} 
		Method & \# Paired Data & PSNR $\uparrow$ & SSIM $\uparrow$ \\ 
		\specialrule{0em}{.9pt}{.9pt}
		\hline
		C2N~\cite{jang2021c2n} & \multirow{3}{*}{0} & 33.95 & 0.878 \\
		DeFlow~\cite{wolf2021deflow}      &         & 33.81 & 0.897 \\
		LUD-VAE~\cite{zheng2022learn}     &         & 34.82 & 0.926 \\ 
		\hline
		\multirow{3}{*}{\textbf{SeNM-VAE}} & 0.01\% (10) & 36.68 & 0.931 \\ 
                                           & 0.1\% (96)  & 36.89 & 0.928 \\
                                           & 1\% (960)   & \textbf{37.24} & \textbf{0.938} \\ 
		\hline
        DANet~\cite{yue2020dual} & \multirow{4}{*}{100\%} & 36.20 & 0.925 \\
        Flow-sRGB~\cite{kousha2022modeling} &             & 33.24 & 0.876 \\
        NeCA-W~\cite{fu2023srgb}            &             & 36.95 & 0.935 \\ 
        \textbf{SeNM-VAE}                   &             & \textbf{38.27} & \textbf{0.946} \\ 
        \hline
        Real noise                          & 100\%       & 38.31 & 0.946 \\
        \hline
	\end{tabular}
	\caption{Comparison of denoising results on SIDD benchmark. DnCNN~\cite{zhang2017beyond} is used as a downstream denoising model.}
    \label{noise_quality_sidd_bench}
\end{table}

\begin{table}
	\setlength{\tabcolsep}{7pt}
	\renewcommand{\arraystretch}{0.8}
	\centering	
	\begin{tabular}{cccc}
		\hline
		\specialrule{0em}{.9pt}{.9pt}
		\rowcolor[gray]{0.95} 
		Method & \# Paired Data & PSNR $\uparrow$ & SSIM $\uparrow$ \\ 
		\specialrule{0em}{.9pt}{.9pt}
		\hline
		C2N~\cite{jang2021c2n} & \multirow{3}{*}{0} & 36.08 & 0.903 \\
		DeFlow~\cite{wolf2021deflow}      &         & 36.71 & 0.923 \\
		LUD-VAE~\cite{zheng2022learn}     &         & 37.60 & 0.933 \\ 
		\hline
		\multirow{3}{*}{\textbf{SeNM-VAE}} & 0.01\% (10) & 37.94 & 0.936 \\ 
                                           & 0.1\% (96)  & 38.21 & 0.942 \\
                                           & 1\% (960)   & \textbf{38.44} & \textbf{0.943} \\ 
		\hline
        DANet~\cite{yue2020dual} & \multirow{4}{*}{100\%} & 38.21 & 0.943 \\
        Flow-sRGB~\cite{kousha2022modeling} &             & 36.09 & 0.895 \\
        NeCA-W~\cite{fu2023srgb}            &             & 38.70 & 0.946 \\ 
        \textbf{SeNM-VAE}                   &             & \textbf{39.09} & \textbf{0.950} \\ 
        \hline
        Real noise                          & 100\%       & 38.83 & 0.949 \\
        \hline
	\end{tabular}
	\caption{Comparison of denoising results on DND benchmark. DnCNN~\cite{zhang2017beyond} is used as a downstream denoising model.}
    \label{noise_quality_dnd_bench}
\end{table}

\subsection{Model complexity}

The proposed SeNM-VAE can effectively utilize a limited amount of paired data together with unpaired data to enhance the generation of high-quality training samples, without necessitating extensive computational resources. Specifically, the total number of parameters in our model amounts to $9.946$M, with a total FLOPs of $617.36$G required to generate a single $256 \times 256 \times 3$ image. Additionally, training can be completed within approximately 2 days on a single Nvidia 2080 Ti GPU on the SIDD dataset. During the inference stage, generating 1,280 images takes around 31 seconds.

\subsection{Training stability}

The overall training objective of SeNM-VAE consists of three parts. Firstly, it involves maximizing the conditional log-likelihood function, $\log p\left(\rvy | \rvx\right)$, through variational inference methods and the proposed mixture model, encompassing three key elements:
\begin{equation}
    \begin{aligned}
        & \mathbb E_{q\left(\rvz | \rvx, \rvy\right)}\KL\left(q\left(\rvz_\rvn | \rvy, \rvz\right) || p\left(\rvz_\rvn | \rvz\right)\right) \\
        -& \mathbb E_{q\left(\rvz | \rvx, \rvy\right) q\left(\rvz_\rvn | \rvy, \rvz\right)} \log p\left(\rvy | \rvz, \rvz_\rvn\right) \\
        +& \lambda \KL\left(q\left(\rvz | \rvy\right) || q\left(\rvz | \rvx\right)\right).
    \end{aligned}
    \label{eq:loss_log_likelihood}
\end{equation}
Another component comprises a regularization term, namely $\mathbb E_{q\left(\rvz | \rvx, \rvy\right)} \log p\left(\rvx | \rvz\right)$. This term plays a crucial role in enhancing the reconstruction capability of the source domain data, especially since the terms in \eqref{eq:loss_log_likelihood} do not directly regulate the source domain data. To augment the generative capacity of the VAE model, we incorporate the LPIPS loss and GAN loss to complement the loss function for noisy image reconstruction, constituting the third part of the loss function. In our experiments, we train our model using the conventional ADAM optimizer~\cite{kingma2014adam} with its default settings. Employing standard training techniques in VDVAE~\cite{child2020very}, we observe stable convergence performance, as depicted in Figure~\ref{fig:loss_curve}.

\begin{figure}
    \centering
    \includegraphics[width = 0.8\linewidth]{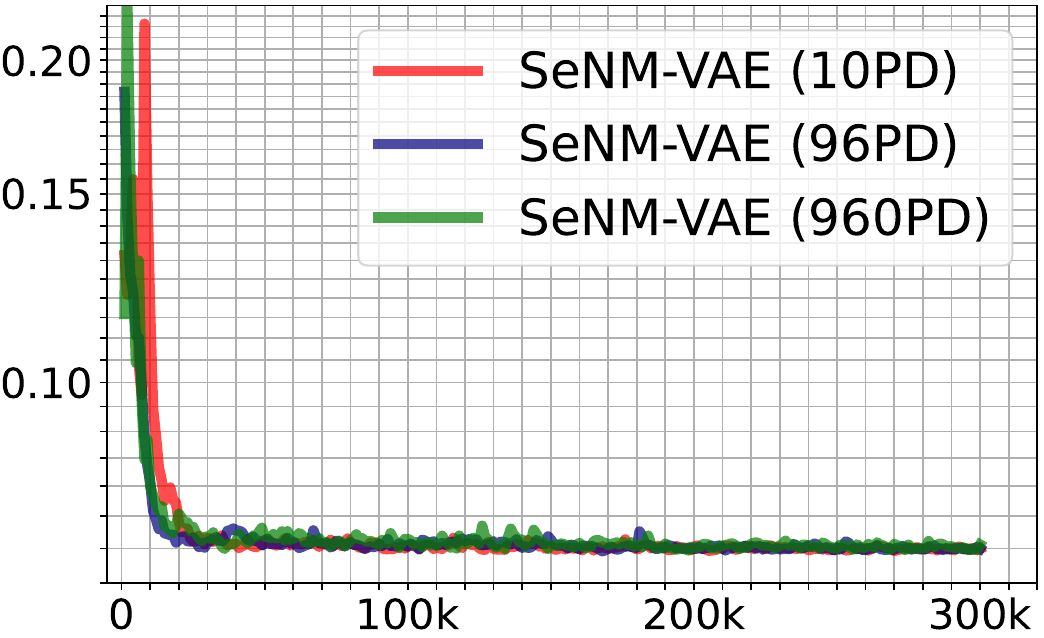}
    \caption{Loss curve of SeNM-VAE during training. Our model converges to the minimum steadily and uniformly, regardless of the quantity of paired samples utilized.}
    \label{fig:loss_curve}
\end{figure}

\subsection{Degradation modeling in real-world SR}


As a complementary investigation to the noise synthesis experiment presented in the main paper, we conduct analogous assessments to evaluate the quality of the generated training pairs in real-world SR tasks. The configuration for training the degradation modeling methods remains consistent with that outlined in the downstream SR experiment. Subsequently, we train the ESRGAN~\cite{wang2018esrgan} model using paired data derived from the comparison methods. The resultant metrics, including PSNR, SSIM, and LPIPS, on both the AIM19 and NTIRE20 datasets, are detailed in Tables~\ref{tab:aim19} and Table~\ref{tab:ntire20}, respectively. These results demonstrate the effectiveness of our semi-supervised approach in learning the degradation model in real-world SR scenarios.

\begin{table}
	\setlength{\tabcolsep}{7.3pt}
	\renewcommand{\arraystretch}{0.8}
	\centering	
	\begin{tabular}{cccc}
		\hline
		\specialrule{0em}{.9pt}{.9pt}
		\rowcolor[gray]{0.95} 
		Method      & PSNR $\uparrow$ & SSIM $\uparrow$ & LPIPS $\downarrow$ \\ 
		\specialrule{0em}{.9pt}{.9pt}
		\hline
        FSSR   & 20.97 & 0.5383 & 0.374 \\
	Impressionism    & 21.93 & 0.6128 & 0.426 \\
        DASR     & 21.05 & 0.5674 & 0.376 \\
        DeFlow        & 21.43 & 0.6003 & 0.349 \\
        LUD-VAE       & 22.25 & 0.6194 & 0.341 \\
        \hline
        \textbf{SeNM-VAE}                   & \textbf{22.37} & \textbf{0.6307} & \textbf{0.335} \\
        \hline
	\end{tabular}
	\caption{Comparison of SR performance on AIM19. SeNM-VAE is trained with 10 paired data.}
    \label{tab:aim19}
\end{table}

\begin{table}
	\setlength{\tabcolsep}{7.3pt}
	\renewcommand{\arraystretch}{0.8}
	\centering	
	\begin{tabular}{cccc}
		\hline
		\specialrule{0em}{.9pt}{.9pt}
		\rowcolor[gray]{0.95} 
		Method      & PSNR $\uparrow$ & SSIM $\uparrow$ & LPIPS $\downarrow$ \\ 
		\specialrule{0em}{.9pt}{.9pt}
		\hline
        FSSR   & 21.01 & 0.4229 & 0.435 \\
	Impressionism     & 25.24 & 0.6740 & 0.230 \\
        DASR     & 22.98 & 0.5093 & 0.379 \\
        DeFlow        & 24.95 & 0.6746 & 0.217 \\
        LUD-VAE       & 25.78 & 0.7196 & 0.220 \\
        \hline
        \textbf{SeNM-VAE}                   & \textbf{25.91} & \textbf{0.7212} & \textbf{0.216} \\
        \hline
	\end{tabular}
	\caption{Comparison of SR performance on NTIRE20. SeNM-VAE is trained with 10 paired data.}
    \label{tab:ntire20}
\end{table}

\subsection{Effects of varying mixture weights}

In our main paper, we define the inference model $q\left(\rvz | \rvx, \rvy\right)$ as a linear combination of two mixture components $q\left(\rvz | \rvx\right)$ and $q\left(\rvz | \rvy\right)$, expressed as:
\begin{equation*}
    q\left(\rvz | \rvx\right) = p_1 q\left(\rvz | \rvx\right) + p_2 q\left(\rvz | \rvy\right),
\end{equation*}
where $p_1$ and $p_2$ are mixture weights. In this experiment, we investigate the impact of different $p_1$ and $p_2$ values. Given that $p_2 = 1 - p_1$, we evaluate five cases for $p_1$ using the SIDD dataset, each with 10 paired samples. As shown in Table~\ref{tab:optim_p_1}, the noisy data generated by SeNM-VAE achieves the minimum FID and KLD values when $p_1 = 0.5$, while the downstream denoising network (DnCNN~\cite{zhang2017beyond}) exhibits its highest PSNR when $p_1 = 0.7$.

\begin{table}[ht]
    \setlength{\tabcolsep}{7.5pt}
    \renewcommand{\arraystretch}{0.8}
    \centering	
    \begin{tabular}{cccccc}
        \hline
        \specialrule{0em}{.9pt}{.9pt}
        \rowcolor[gray]{0.95} 
        $p_1$ & 0.1 & 0.3 & 0.5 & 0.7 & 0.9 \\ 
        \specialrule{0em}{.9pt}{.9pt}
        \hline
        FID $\downarrow$ & 17.39 & 18.27 & \textbf{17.25} & 19.20 & 17.99 \\
        KLD $\downarrow$ & 0.037 & 0.044 & \textbf{0.036} & 0.039 & 0.044 \\
        PSNR $\uparrow$ & 36.48 & 36.28 & 36.73 & \textbf{36.98} & 36.72 \\
        \hline
    \end{tabular}
    \caption{Comparison of noise quality on SIDD validation dataset. DnCNN~\cite{zhang2017beyond} is used as a downstream denoising model.}
    \label{tab:optim_p_1}
\end{table}

\section{Visual results}

Owing to the space constraints within the main context, we exhibit additional visualizations of synthetic noise, real-world denoising results, and real-world SR results as a supplement.

\subsection{Noise synthesis}

We present synthesized noisy images generated by SeNM-VAE trained with varying numbers of paired data. Furthermore, we conduct a comparative analysis with fully-paired deep noise modeling methods, including DANet~\cite{yue2020dual}, Flow-sRGB~\cite{kousha2022modeling}, and NeCA-W~\cite{fu2023srgb}. The visual results on the SIDD validation dataset are depicted in Figure~\ref{supp_noise-generation}.

\subsection{Real-world denoising}

{\noindent \bf Downstream denoising.} We employ DRUNet~\cite{zhang2021plug} as the downstream denoising model and train it on the paired domain alongside synthetic paired samples generated by SeNM-VAE. We compare our semi-supervised denoising method with direct training on the paired domain and several self-supervised denoising methods, namely CVF-SID~\cite{neshatavar2022cvf}, AP-BSN + R$^3$~\cite{lee2022ap}, SCPGabNet~\cite{lin2023unsupervised}, and SDAP(S)(E)~\cite{pan2023random}. Denoising results on the SIDD validation dataset are displayed in Figure~\ref{supp_downstream-denoising1}, Figure~\ref{supp_downstream-denoising2}, and Figure~\ref{supp_downstream-denoising3}.

{\noindent \bf Finetune denoising.} We perform fine-tuning on NAFNet~\cite{chen2022simple}, a pre-trained denoising model, using additional training samples generated by SeNM-VAE trained with full paired data from the SIDD training dataset. The finetuned NAFNet is compared against its original version as well as three alternative methods, namely Uformer~\cite{wang2022uformer}, MAXIM~\cite{tu2022maxim}, and Restormer~\cite{zamir2022restormer}. Denoising results on the SIDD validation dataset are presented in Figure~\ref{supp_finetune-denoising}.

\subsection{Real-world SR}

SeNM-VAE is also employed to simulate the degradation process of real-world SR tasks. We leverage ESRGAN~\cite{wang2018esrgan} as the downstream model. Our semi-supervised SR method is compared with a supervisedly trained ESRGAN, along with five unpaired degradation modeling methods, namely FSSR~\cite{fritsche2019frequency}, Impressionism~\cite{ji2020real}, DASR~\cite{wei2021unsupervised}, DeFlow~\cite{wolf2021deflow}, and LUD-VAE~\cite{zheng2022learn}. Evaluation is conducted on the AIM19 and NTIRE20 validation datasets. Visualizations of the SR results are provided in Figure~\ref{supp_SR_AIM1}, Figure~\ref{supp_SR_AIM2}, Figure~\ref{supp_SR_NTIRE1}, and Figure~\ref{supp_SR_NTIRE2}.

\begin{figure*}
    \centering
    \includegraphics[width = 1\linewidth]{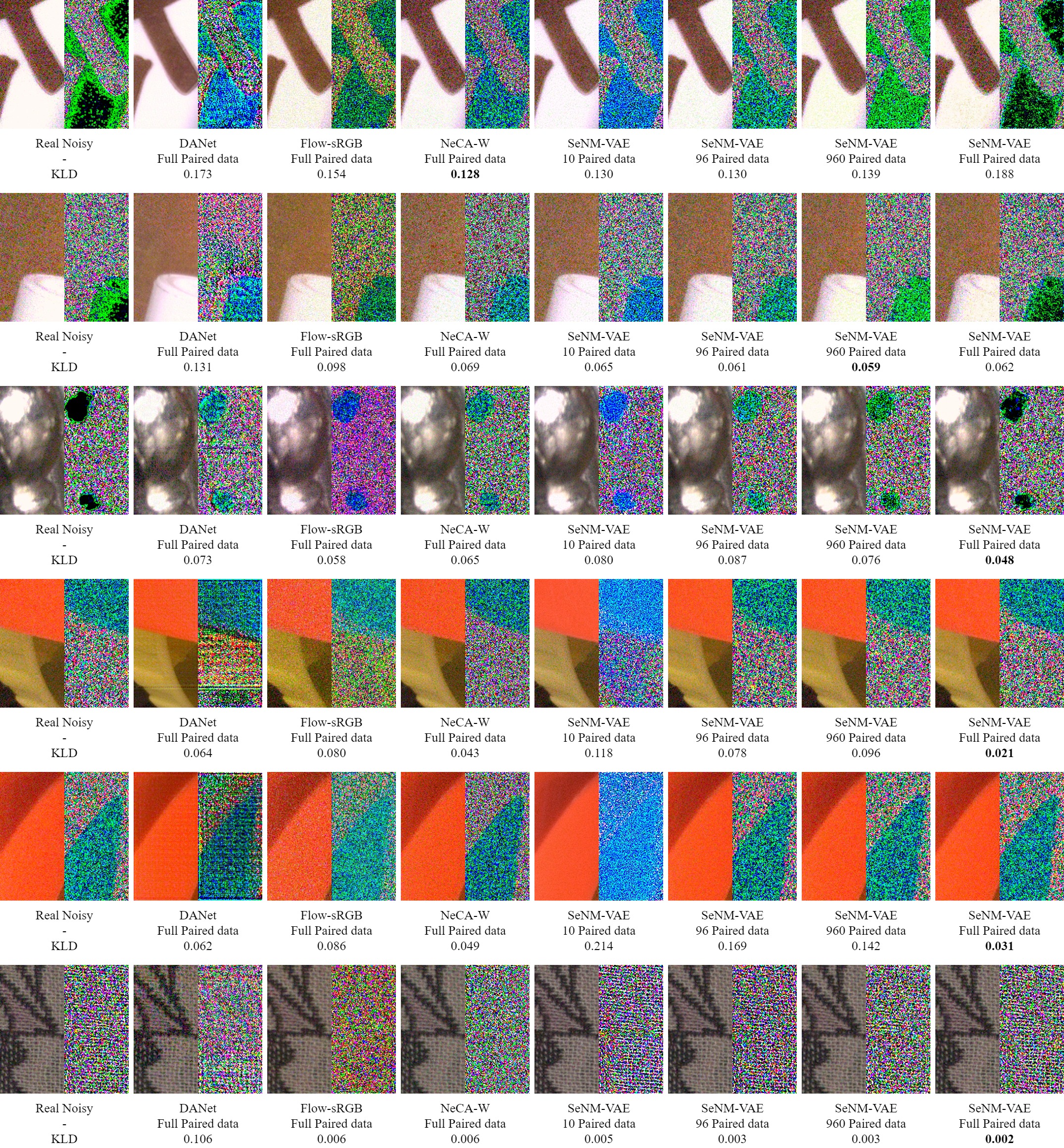}
    \caption{Visual comparisons of noise generation on the SIDD validation set. KLD value is reported as the performance metric.}
    \label{supp_noise-generation}
\end{figure*}

\begin{figure*}
    \centering
    \includegraphics[width = 1\linewidth]{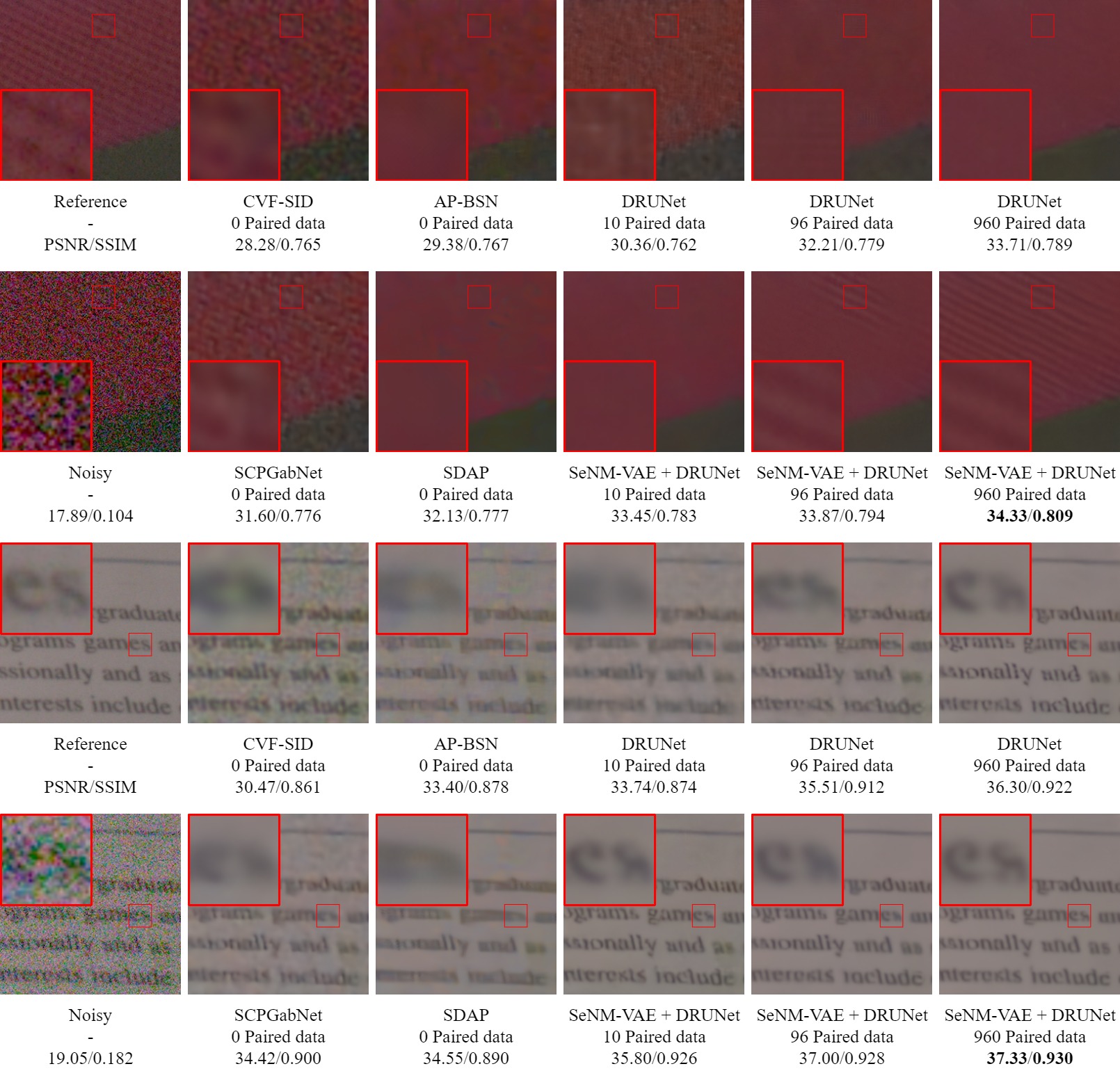}
    \caption{Visual comparisons of downstream denoising results on the SIDD validation set. Performance metrics, including PSNR and SSIM values, are reported for evaluation.}
    \label{supp_downstream-denoising1}
\end{figure*}

\begin{figure*}
    \centering
    \includegraphics[width = 1\linewidth]{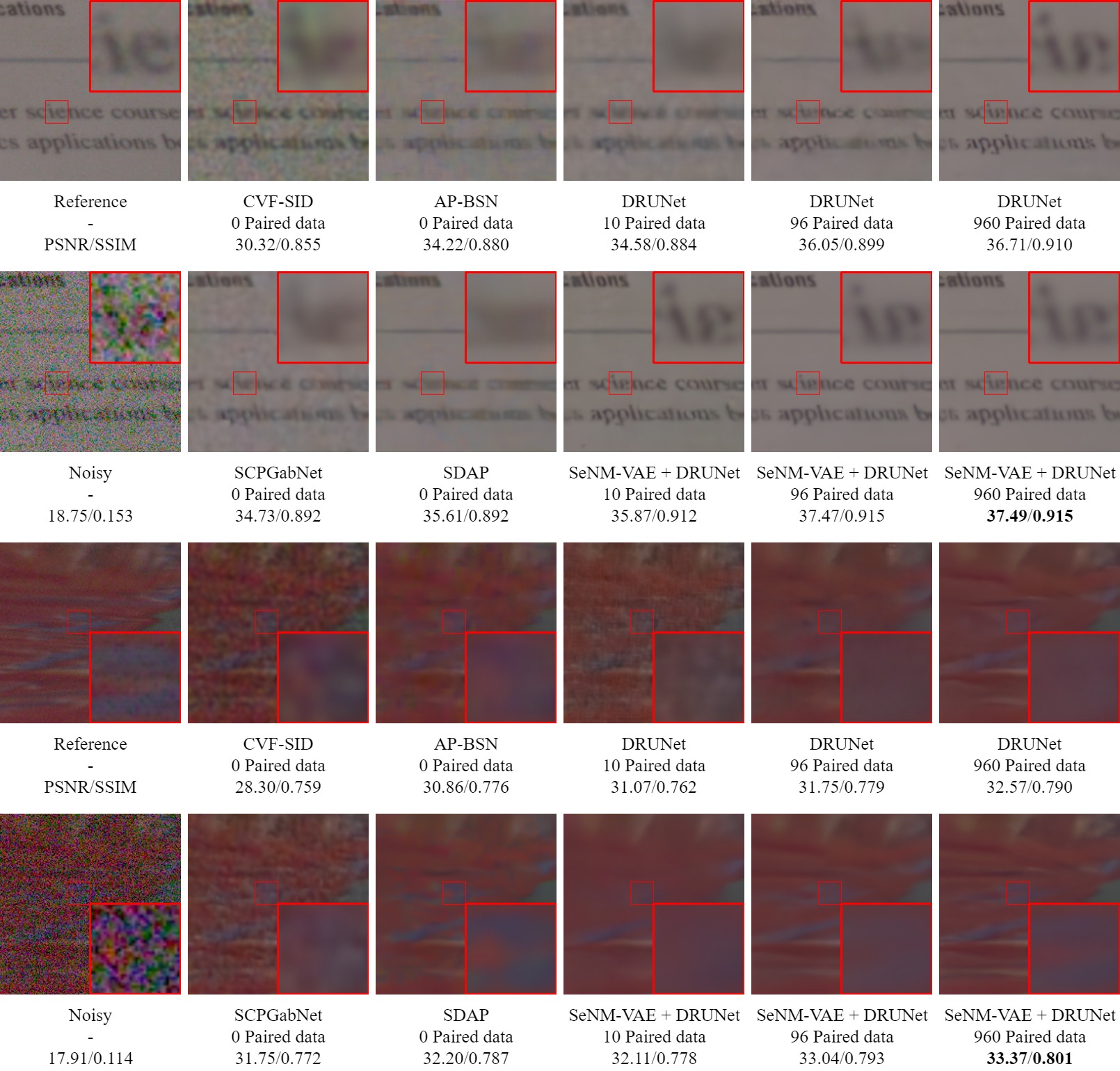}
    \caption{Visual comparisons of downstream denoising results on the SIDD validation set. Performance metrics, including PSNR and SSIM values, are reported for evaluation.}
    \label{supp_downstream-denoising2}
\end{figure*}

\begin{figure*}
    \centering
    \includegraphics[width = 1\linewidth]{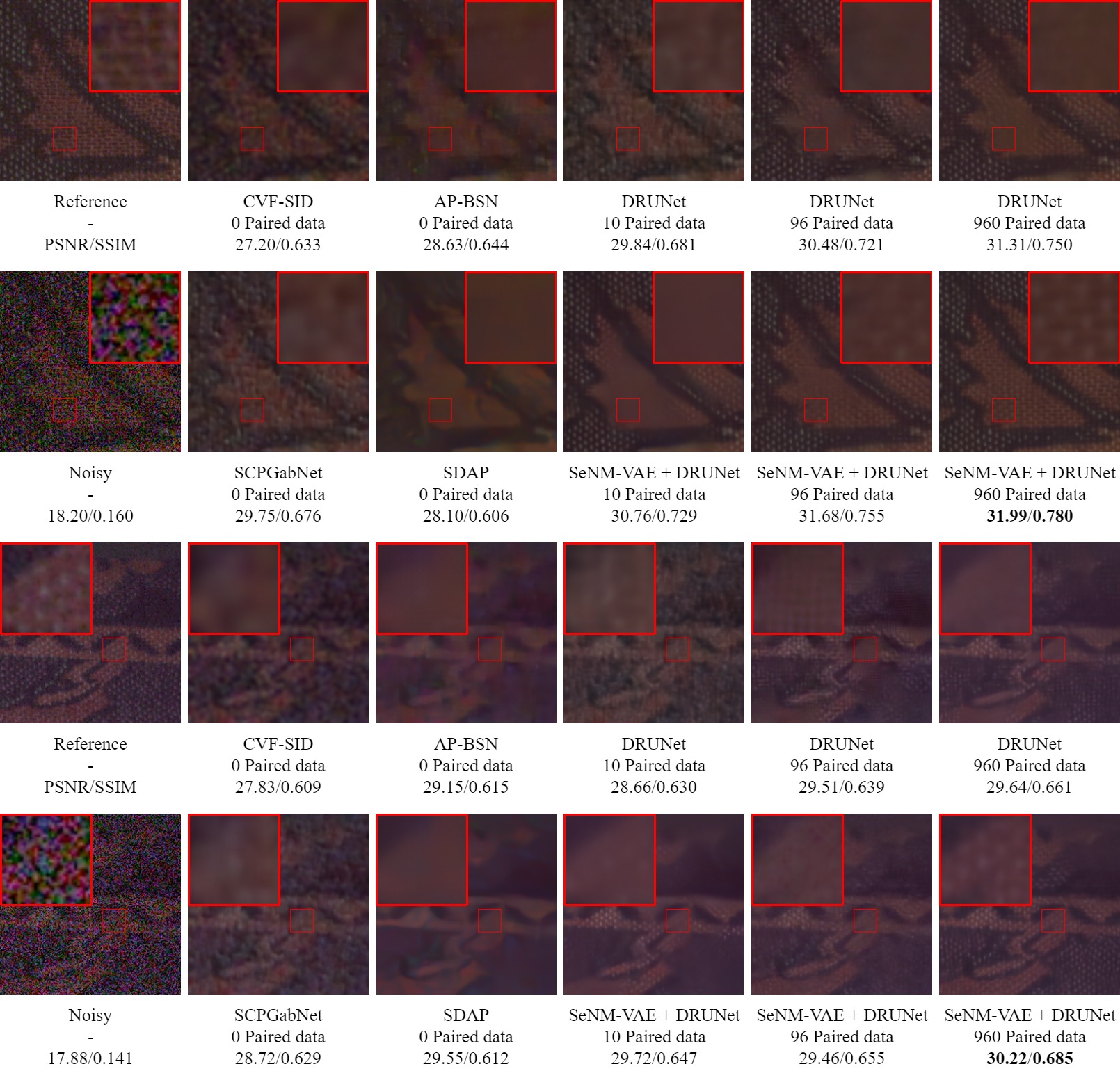}
    \caption{Visual comparisons of downstream denoising results on the SIDD validation set. Performance metrics, including PSNR and SSIM values, are reported for evaluation.}
    \label{supp_downstream-denoising3}
\end{figure*}

\begin{figure*}
    \centering
    \includegraphics[width = 1\linewidth]{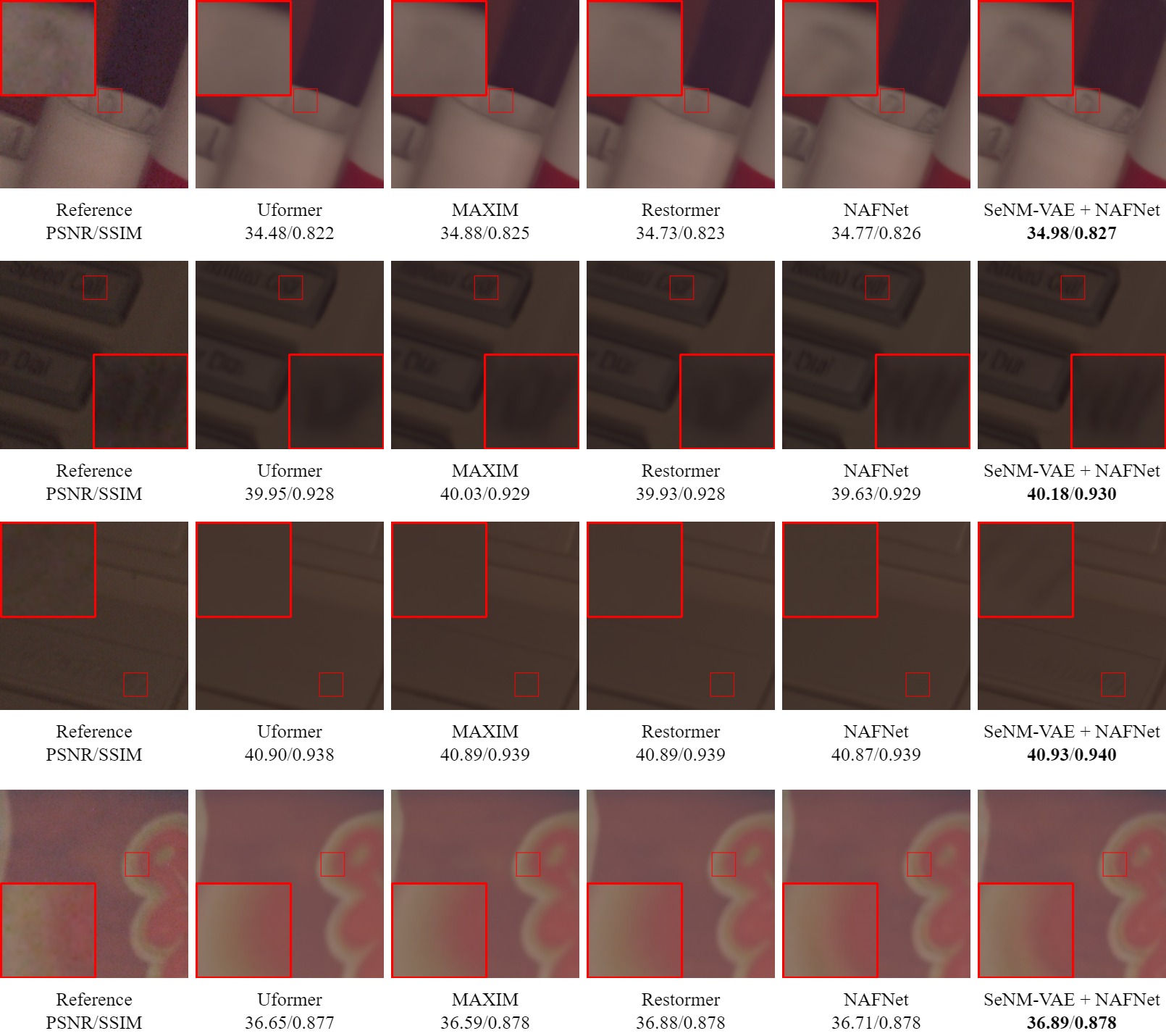}
    \caption{Visual comparisons of fine-tuned denoising results on the SIDD validation set. Performance metrics, including PSNR and SSIM values, are reported for evaluation.}
    \label{supp_finetune-denoising}
\end{figure*}

\clearpage

\begin{figure*}
    \centering
    \includegraphics[width = 1\linewidth]{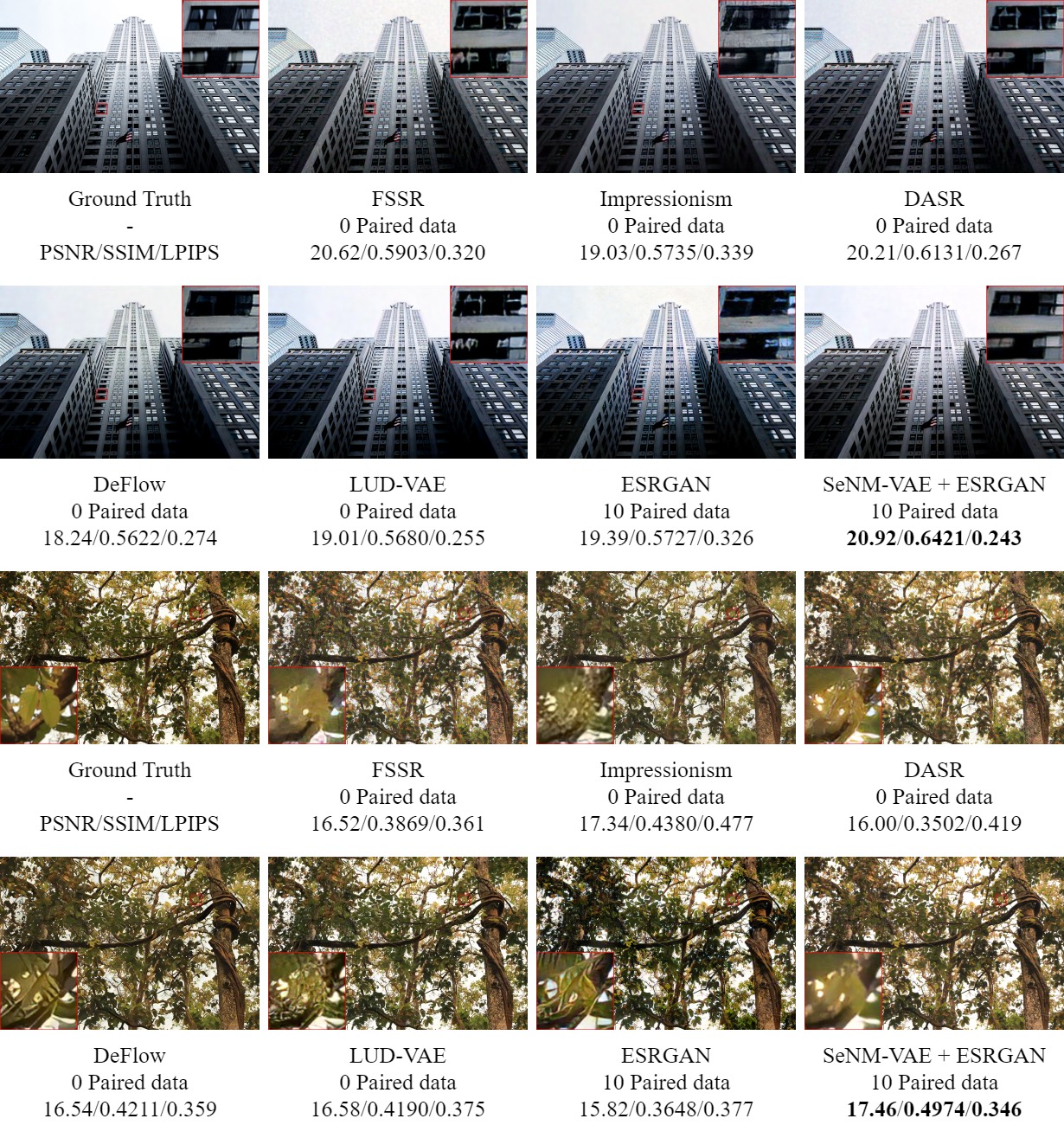}
    \caption{Visual comparisons of real-world SR results on the AIM19 validation set. Performance metrics, including PSNR, SSIM, and LPIPS values, are provided for evaluation.}
    \label{supp_SR_AIM1}
\end{figure*}

\clearpage

\begin{figure*}
    \centering
    \includegraphics[width = 1\linewidth]{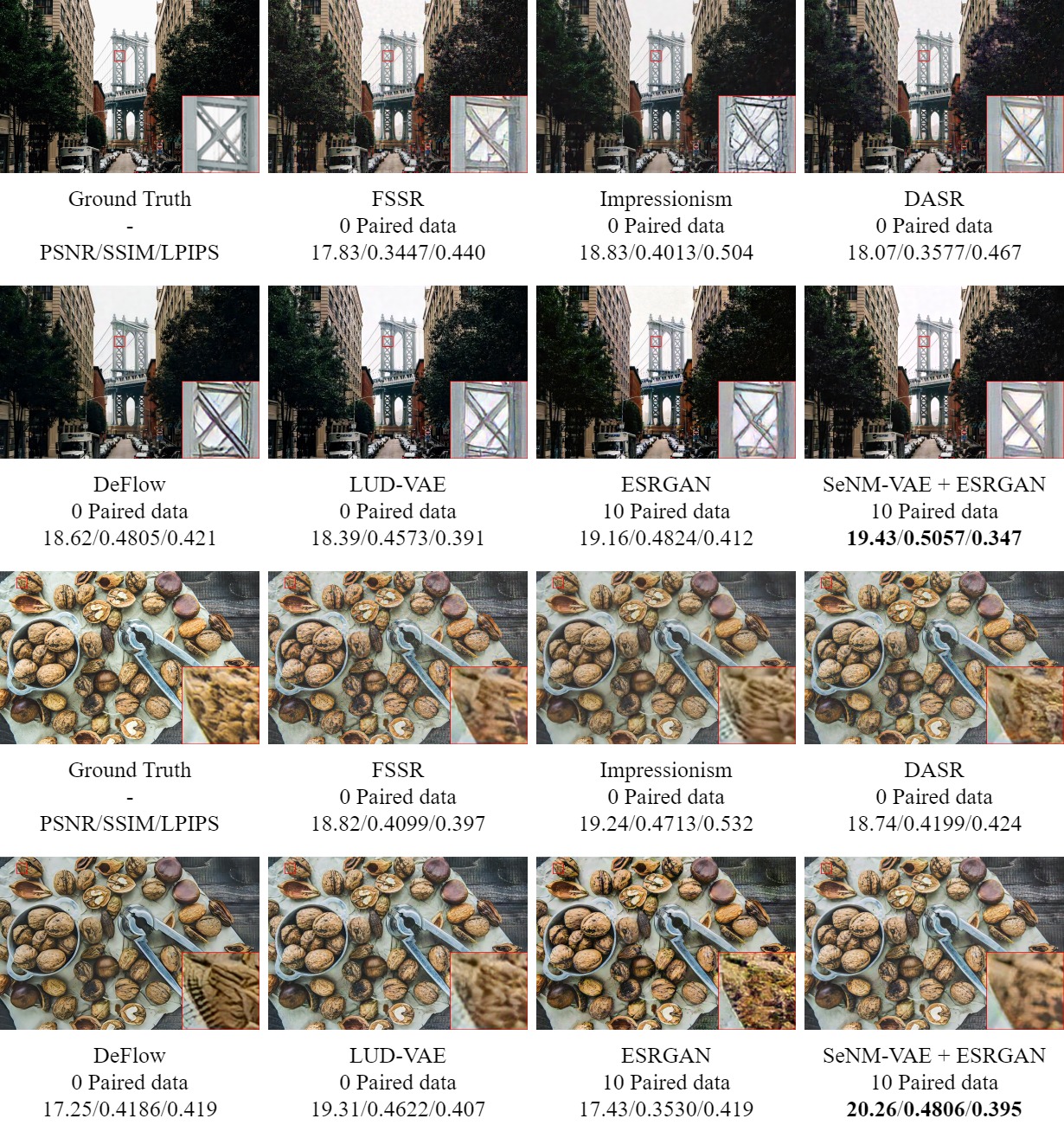}
    \caption{Visual comparisons of real-world SR results on the AIM19 validation set. Performance metrics, including PSNR, SSIM, and LPIPS values, are provided for evaluation.}
    \label{supp_SR_AIM2}
\end{figure*}

\clearpage

\begin{figure*}
    \centering
    \includegraphics[width = 1\linewidth]{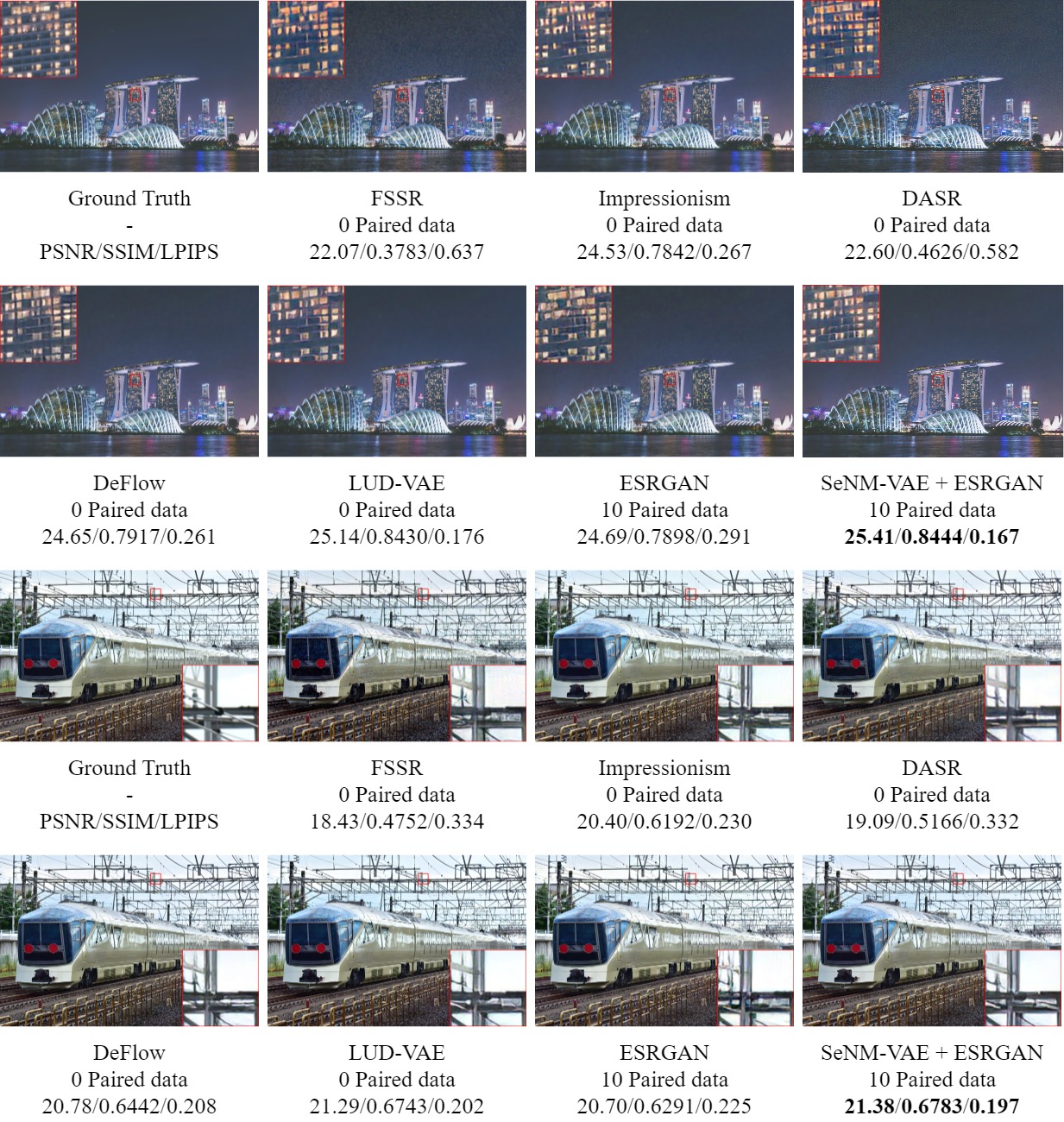}
    \caption{Visual comparisons of real-world SR results on the NTIRE20 validation set. Performance metrics, including PSNR, SSIM, and LPIPS values, are provided for evaluation.}
    \label{supp_SR_NTIRE1}
\end{figure*}

\clearpage

\begin{figure*}
    \centering
    \includegraphics[width = 1\linewidth]{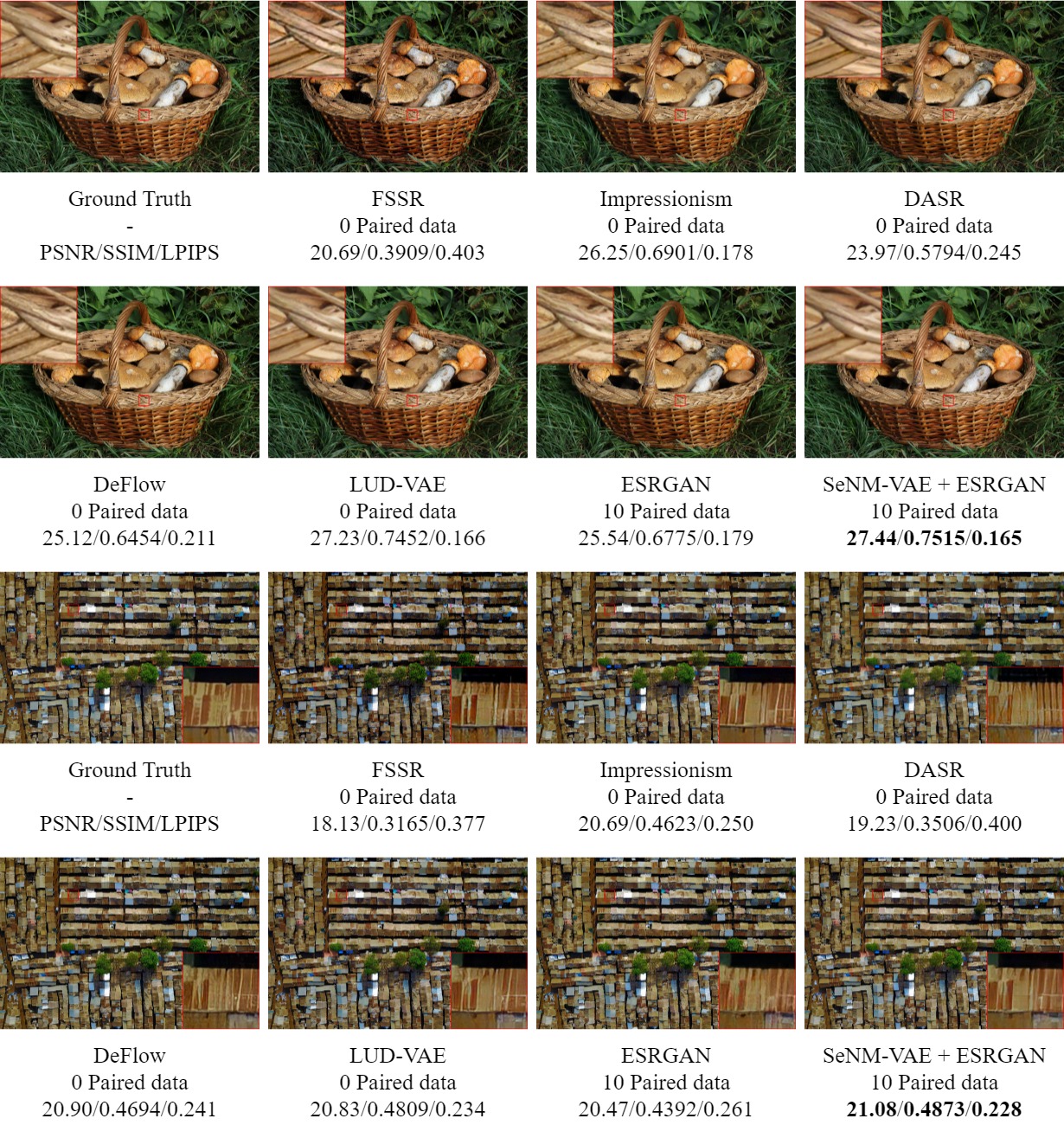}
    \caption{Visual comparisons of real-world SR results on the NTIRE20 validation set. Performance metrics, including PSNR, SSIM, and LPIPS values, are provided for evaluation.}
    \label{supp_SR_NTIRE2}
\end{figure*}

\clearpage


\end{document}